\documentclass{article}
\usepackage{graphicx}
\usepackage{amsmath}
\usepackage{booktabs}
\usepackage{multirow}
\usepackage{subcaption}
\usepackage{amsthm}
\newtheorem{proposition}{Proposition}
\newtheorem{corollary}{Corollary}
\newtheorem{definition}{Definition}
\newtheorem{remark}{Remark}

\PassOptionsToPackage{square,authoryear}{natbib}

 \usepackage[preprint]{neurips_2026}

\usepackage[utf8]{inputenc} 
\usepackage[T1]{fontenc}    
\usepackage{hyperref}       
\usepackage{url}            
\usepackage{booktabs}       
\usepackage{amsfonts}       
\usepackage{nicefrac}       
\usepackage{microtype}      
\usepackage{xcolor}         

\title{Encoder–Decoder Manifold Alignment for Idempotent Generation}

\author{%
  Dareen Alharthi \\
  Carnegie Mellon University \\
  \texttt{dalharth@cs.cmu.edu}
  \And
  Abdul Waheed \\
  Carnegie Mellon University \\
  \texttt{abdulw@cs.cmu.edu}
  \And
  Bhiksha Raj \\
  Carnegie Mellon University \\
  \texttt{bhiksha@cs.cmu.edu}
}
\begin{document}

\maketitle

\begin{abstract}

Recently, several learning paradigms have been introduced to enforce idempotency in generative models. The goal is to ensure that repeated application of a model leaves samples unchanged once they lie on the target data manifold. In practice, however, many of these approaches fail to achieve exact fixed points, leading to instability and drift under repeated application. In this work, we argue that a key reason for this failure is a geometric mismatch between the manifolds learned by the encoder and decoder. The encoder projects inputs onto one latent manifold, while the decoder implicitly learns to reconstruct data from a different manifold. This discrepancy prevents the model from learning truly idempotent mappings. To address this issue, we propose a new training framework that explicitly closes this gap by forcing the encoder and decoder to learn consistent representations of the same underlying data manifold. By aligning the geometry of these components, our method encourages stable projections. Empirically, we show that our approach achieves significantly lower idempotency error and consistently regenerates identical outputs under repeated application, compared to existing methods. We demonstrate the effectiveness of the proposed framework on both image generation and image editing tasks. Finally, we show that enforcing idempotency in this manner improves identity preservation and information stability, leading to more realistic and controllable generative editing models. \end{abstract}

\section{Introduction}\label{sec:intro}

Generative modeling has become a foundational tool across a wide range of data modalities,
from images and speech to video \citep{rombach2022high, shen2023naturalspeech, ho2022video}.
Modern high-performing generative models, including latent diffusion models \citep{wu2025latentps,rombach2022high}, flow matching \citep{lipman2022flow,hu2024latent},
and rectified flows \citep{liu2022flow,lupacscu2026optimal,kim2025reflex}, share a common architectural principle: rather than operating directly
in data space, they learn a compact latent representation and perform generation entirely
within it. This design choice is well motivated.
Raw data spaces, such as pixel grids or audio waveforms, are high-dimensional and
highly structured, making them difficult to model directly.
By compressing data through a learned encoder and reconstructing it through a learned decoder, these models delegate the complexity of the data distribution to a well-behaved latent space where the generative process is far easier to learn.
The encoder and decoder are, therefore, not implementation details but core components of the
generative pipeline, and their quality directly determines the fidelity and controllability
of the generated outputs.

 
One of the most important downstream applications of generative models is \emph{editing}: taking an input such as an image, speech recording, or video and modifying specific attributes while leaving everything else intact \citep{abdal2021styleflow, anastassiou2024voiceshop, avrahami2025stable, yang2025videograin}. Editing is typically performed in latent space by encoding the input, modifying its latent representation, and decoding it back to data space.

The success of this process depends on the encoder and decoder learning \emph{consistent} representations of the same underlying data manifold. When this consistency fails, latent edits do not remain localized, and changes in one attribute can unintentionally affect others. In practice, this is observed as a common failure mode in current editing models, where modifying a target attribute alters unrelated aspects such as identity or fine acoustic details \citep{pan2025counterfactual}. This raises a fundamental question: what structural property should a generative model satisfy to ensure stable, identity-preserving editing?

A common answer is disentangled representation learning \citep{bengio2013representation}, where the encoder separates factors of variation into distinct latent components and the decoder recombines them to generate data. Editing can then be performed by manipulating specific latent variables in supervised settings \citep{chen2016infogan} or by interpolating the latent space in unsupervised ones \citep{lin2024voxgenesis}.

However, enforcing disentanglement can come at a cost. These objectives often encourage models to prioritize a restricted set of explanatory factors, potentially discarding subtle or hard-to-model variations. As a result, the learned representation may fail to preserve all information in the original input, leading to degraded identity and fine-detail preservation after editing \citep{burgess2018understanding, shukor2022semantic, dalva2023image}.
\paragraph{Idempotency as a representation property.}

We argue that the answer is \emph{idempotency}.
A function $f$ is idempotent if $f(f(x)) = f(x)$: applying it more than once has no
further effect.
For an encoder--decoder model $f = D_\theta \circ E_\phi$, idempotency means that once the
model has mapped an input to a point on its learned manifold, subsequent applications of the
model leaves that point unchanged.
This is not merely a stability condition, but it is a \emph{necessary requirement for optimality}.
The composition $D_\theta \circ E_\phi$ implements a projection onto the learned data manifold,
and projection operators are idempotent by definition~\citep{taylor1980,deutsch2001best,federer2014geometric}.
If the encoder--decoder composition is not idempotent, it is provably suboptimal as a
projection: there exists a better manifold onto which the model could project, and the
learned representation is therefore not optimal.
Enforcing idempotency is thus a principled way to improve the quality of the learned
representation and, as we demonstrate, the success rate and fidelity of editing operations.



Prior work on idempotent generative models has focused on a fundamentally different objective.~\cite{shocher2023idempotent} introduced the Idempotent Generative Network (IGN), which trains a single network $f : \mathcal{X} \rightarrow \mathcal{X}$ to project any input, including noise, corrupted data, or out-of-distribution samples, onto the data manifold in one step.~\cite{jensenEnforcing} proposed a perturbation theory-based method for making the weight matrices of a network idempotent, improving on the gradient-based approach of IGN.~\cite{zaman2025score} distilled idempotent behavior from a pretrained diffusion model, mitigating the training instabilities and mode collapse that affect adversarial IGN training.
In all three cases, idempotency is studied as a generative property of a mapping from a source distribution, such as Gaussian noise, to the data manifold, where the model is applied iteratively to progressively refine a noisy or corrupted sample toward a realistic output.
These methods operate entirely in the data domain, $f : \mathcal{X} \rightarrow \mathcal{X}$, and their goal is unconditional generation or purification.

Our work is motivated by a different question.
We study idempotency as a \emph{representational} property of the encoder--decoder pair in
latent-variable models, where the encoder and decoder jointly define a projection onto a
learned latent manifold.
The domain of our idempotency condition is the latent space $\mathcal{Z}$, not the data space
$\mathcal{X}$, and the objective is not to generate new samples from noise but to ensure that
the encoder and decoder learn consistent, mutually compatible representations of the data
manifold.

\paragraph{Distinction from cycle consistency.}

Our objective differs fundamentally from cycle consistency \citep{zhu2017unpaired}
and self-supervised consistency methods \citep{englesson2021consistency,chen2021exploring,grill2020bootstrap}. Rather than enforcing
invertibility between two mappings, we impose a fixed point (idempotency)
constraint on a single encoder--decoder operator $T = D \circ E$ by aligning
latent representations before and after reconstruction, i.e.,
$E(D(E(x))) \approx E(x)$. Cycle consistency operates in the data domain with two mappings
$F : \mathcal{X} \rightarrow \mathcal{Y}$ and
$G : \mathcal{Y} \rightarrow \mathcal{X}$, enforcing
$G(F(x)) \approx x$ and $F(G(y)) \approx y$.
This constrains compositions but does not ensure semantic correctness:
the model can learn a consistent yet incorrect correspondence between domains.
Thus, cycle consistency enforces approximate invertibility, not optimality. In contrast, our method acts within a single distribution and encourages
$T$ to behave as a projection via idempotency ($T(T(x)) = T(x)$), a structural
property tied to optimal projection.

We propose a general training framework, illustrated in Figure~\ref{fig:model},
for enforcing encoder--decoder idempotency in latent variable generative models.
The standard reconstruction objective does not ensure that the encoder and decoder
learn compatible representations of the data manifold, leading to geometric mismatch,
drift under repeated application, and degraded editing performance.
We address this with a simple idempotency regularization that explicitly aligns the
encoder and decoder manifolds. Our contributions are as follows:
\begin{itemize}
    \item A theoretical analysis showing that idempotency is a necessary condition
    for an encoder--decoder composition to act as an optimal projection onto the
    data manifold.
    \item A simple, model-agnostic idempotency regularization that aligns the encoder
    and decoder manifolds and reduces drift under repeated application.
    \item Empirical validation on image generation and editing benchmarks showing
    consistent improvements in reconstruction quality and edit success rate ~(Section~\ref{sec:results}).

\end{itemize}
Our results show that encoder--decoder alignment achieves lower idempotency error
and higher edit success rate across all evaluated architectures and datasets,
underscoring the effectiveness and generality of the proposed
framework~(Section~\ref{sec:results}).

\begin{figure*}[t]
        \centering
    \includegraphics[width=\textwidth]{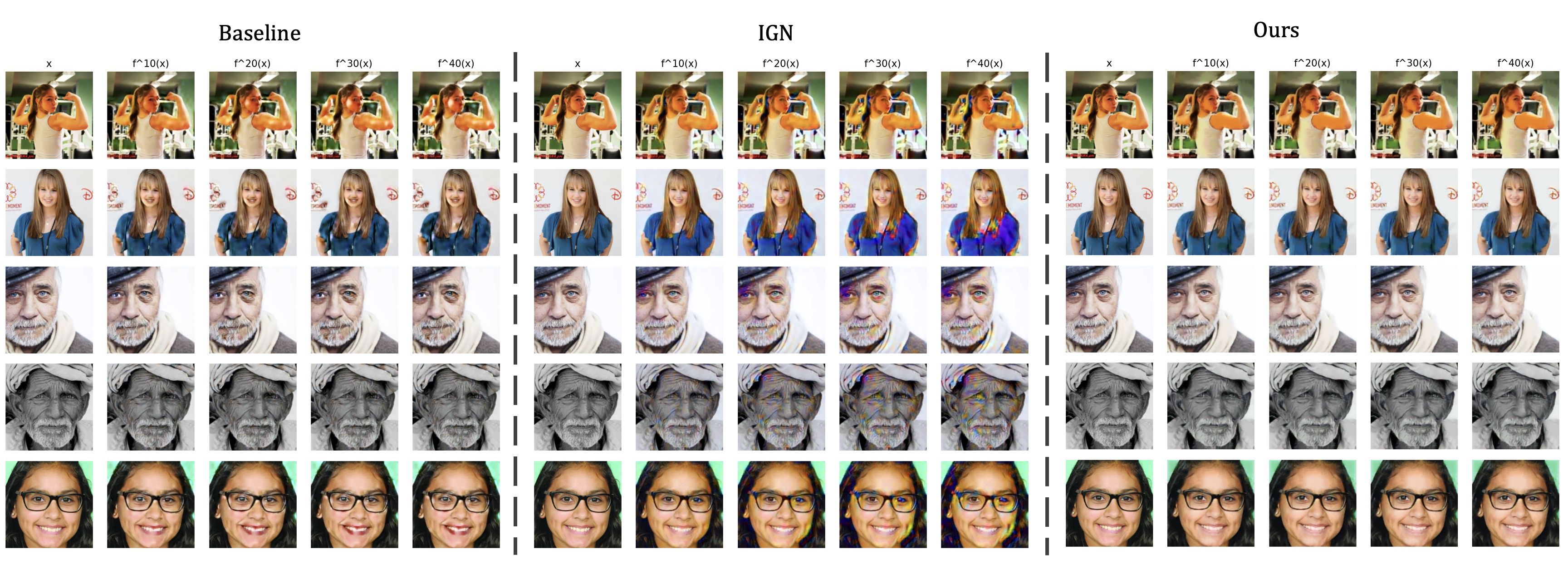}
\caption{
Idempotency test under repeated application.
Each row shows an input image $x$ followed by the result of repeatedly applying the same generative model,
$f^{10}(x)$, $f^{20}(x)$, $f^{30}(x)$, and $f^{40}(x)$.
\textbf{Left:} VQVAE baseline, which exhibits progressive drift under repeated generation.
\textbf{Middle:} Idempotent Generative Network (IGN), which partially reduces drift but accumulates visual artifacts.
\textbf{Right:} Our method, which preserves identity and visual fidelity across iterations.
}

    \label{fig:teaser}
\end{figure*}

\begin{figure*}[t]
    \centering
    \includegraphics[width=1\textwidth]{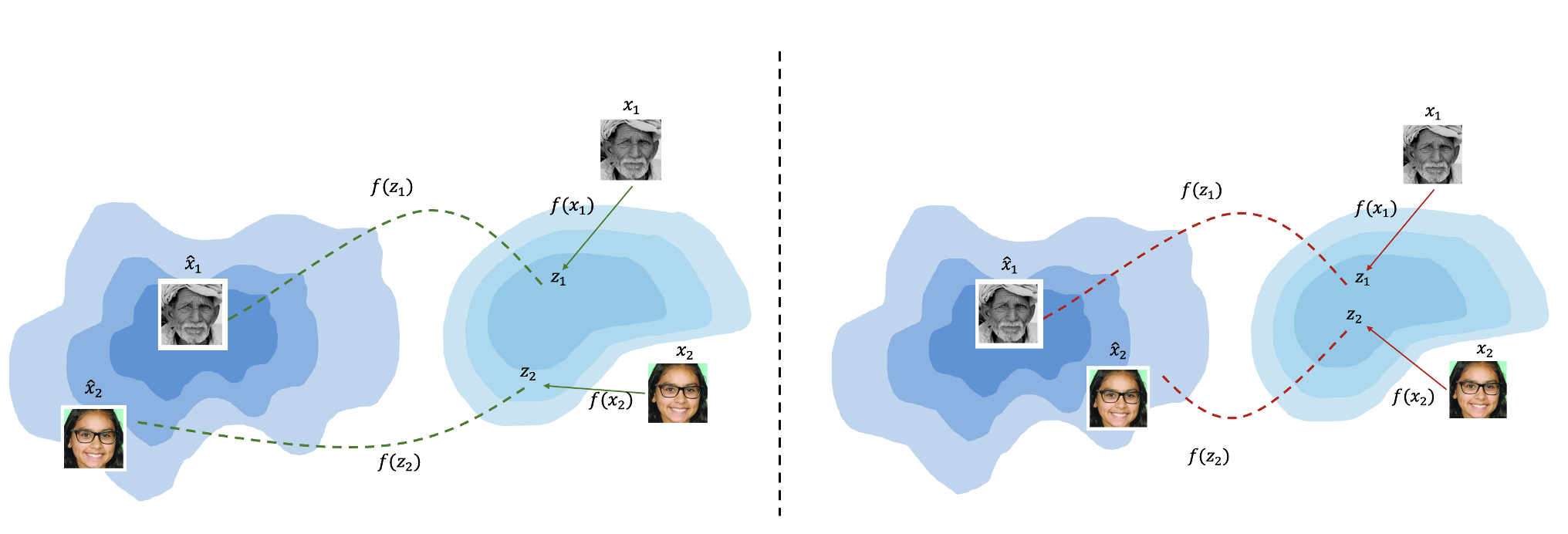}
    
    \caption{
    Comparison between aligned and unaligned encoder--decoder manifolds.
    \textbf{Left:} Aligned manifold, where the encoder and decoder learn consistent representations of the same data manifold, leading to stable fixed points.
    \textbf{Right:} Unaligned manifold, where geometric mismatch between the encoder and decoder causes drift under repeated generation.
    }
    \label{fig:manifold_alignment}
\end{figure*}

\begin{figure}[t]
    \centering
    \includegraphics[width=0.8\textwidth]{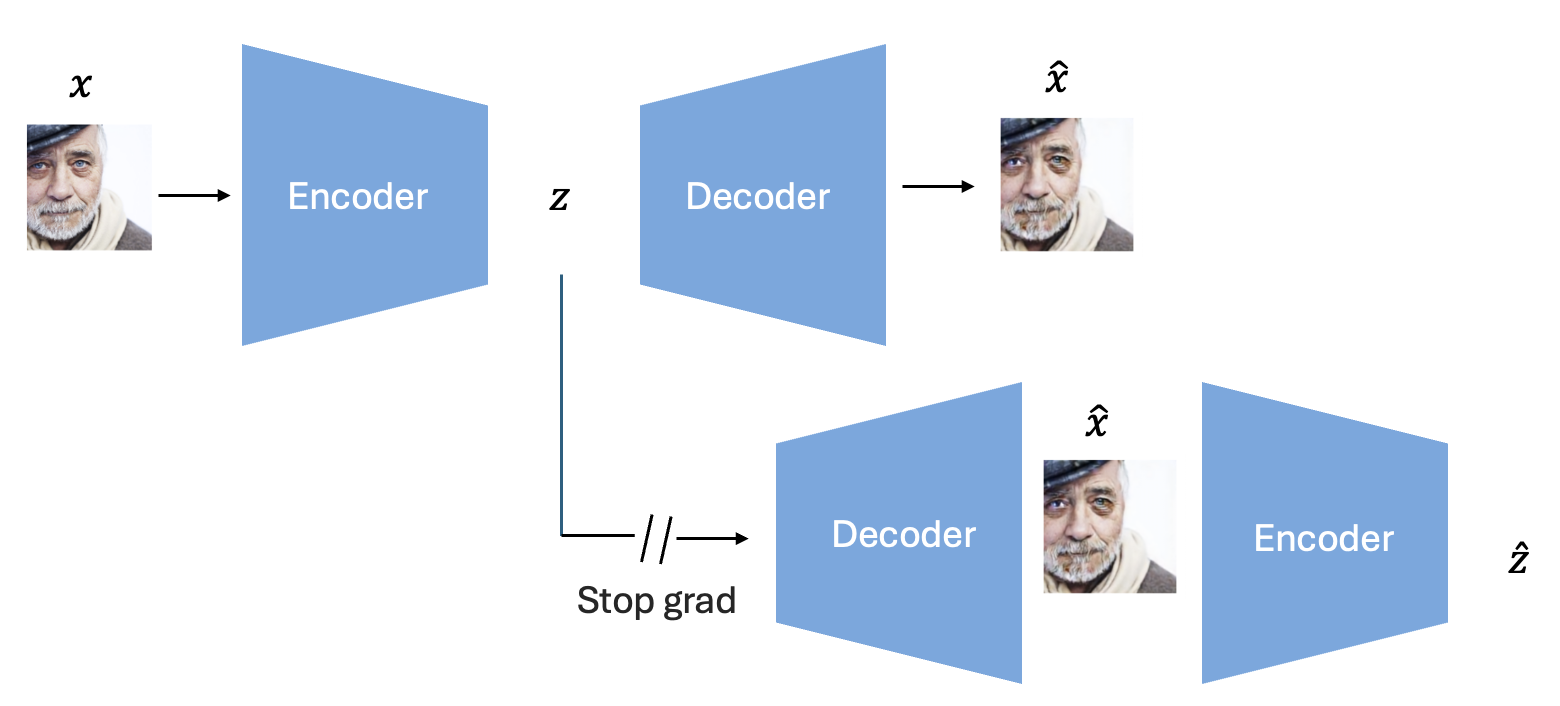}
    
\caption{
Idempotent training via encoder-decoder consistency.
An input sample $x$ is encoded into a latent representation $z$ and decoded to produce a reconstruction $\hat{x}$.
The reconstruction is then passed through a second decode-encode cycle with gradients stopped,
encouraging the encoder and decoder to learn consistent representations.
}
    \label{fig:model}
\end{figure}

\section{Method}
\label{sec:method}


We consider a general encoder--decoder generative model composed of an encoder
$E_\phi : \mathcal{X} \rightarrow \mathcal{Z}$ and a decoder
$D_\theta : \mathcal{Z} \rightarrow \mathcal{X}$.
Given a dataset $\{x_i\}_{i=1}^n$ of examples drawn independently from a data
distribution $p_{\mathcal{X}}$ over $\mathcal{X}$, the encoder maps inputs to a
latent representation space $\mathcal{Z}$, and the decoder maps latent
representations back to $\mathcal{X}$, as illustrated in Figure~\ref{fig:model}.
The overall generative mapping is
\begin{equation}
    f(x) = D_\theta(E_\phi(x)),
\end{equation}
and the standard reconstruction objective is
\begin{equation}
    \mathcal{L}_{\text{rec}}
    =
    \mathbb{E}_{x \sim p_{\mathcal{X}}}
    \big[
    \| D_\theta(E_\phi(x)) - x \|
    \big].
\end{equation}
 
Optimizing this objective alone does not guarantee that the encoder and decoder
learn compatible representations.
In practice, the encoder may learn a latent manifold that differs from the one
implicitly assumed by the decoder, resulting in a geometric mismatch between the
two components, as illustrated on the right side of Figure~\ref{fig:manifold_alignment}.
Such mismatch can result in poor alignment between latent geometry and the data manifold, so that small latent changes do not correspond to controlled or semantically smooth variations in the output.
 
\paragraph{Idempotency as a necessary condition for projection optimality.}
 
We motivate our approach through a projection-theoretic argument.
A well-trained encoder--decoder pair can be understood as implementing a projection
operator $P : \mathcal{X} \rightarrow \mathcal{M}$ onto a learned data manifold
$\mathcal{M}$, where $P(x) = D_\theta(E_\phi(x)) = \operatorname{proj}_{\mathcal{M}}(x)$.
Projection operators are idempotent by definition~\citep{taylor1980,deutsch2001best,federer2014geometric}:
applying the projection to a point already on the manifold leaves it unchanged,
that is, $P^2 = P$, or equivalently,
\begin{equation}
    D_\theta\!\left(E_\phi\!\left(D_\theta(E_\phi(x))\right)\right)
    =
    D_\theta(E_\phi(x)).
    \label{eq:output_idem}
\end{equation}

Crucially, idempotency is not merely a regularization condition but a requirement for a well-defined projection operator. If the encoder--decoder composition is not idempotent, i.e.,
\[
D(E(D(E(x)))) \neq D(E(x)),
\]
then repeated application is inconsistent, and the model cannot represent a stable projection onto a manifold.


This can occur for two reasons. First, the reconstruction residual
$x - D(E(x))$ may not be orthogonal in expectation to the tangent space at
$D(E(x))$, so the output does not behave as a true projection operator.
Second, the encoder and decoder may define mismatched geometries, leading to
drift under repeated encoding and decoding.

In both cases, $D \circ E$ fails to act as a projection. We therefore enforce idempotency to ensure stable and consistent manifold projections.
 
\paragraph{Latent-space consistency as a sufficient condition.}
 
A sufficient condition for the output-space idempotency in
Equation~\eqref{eq:output_idem} is that the encoder produces the same latent
representation before and after decoding:
\begin{equation}
    E_\phi(D_\theta(E_\phi(x))) = E_\phi(x).
    \label{eq:latent_consistency}
\end{equation}
If this latent-space consistency condition holds, both decoding paths in Equation \ref{eq:output_idem} operate on
identical representations and necessarily yield the same output.
Moreover, enforcing consistency in the latent space directly controls
output-space drift.
If $D_\theta$ is $L$-Lipschitz, we have
\begin{equation}
    \left\|
        D_\theta(E_\phi(D_\theta(E_\phi(x)))) - D_\theta(E_\phi(x))
    \right\|
    \;\leq\;
    L \cdot
    \left\|
        E_\phi(D_\theta(E_\phi(x))) - E_\phi(x)
    \right\|,
\end{equation}
so any reduction in the latent-space idempotency error translates directly into
a proportional reduction in the output-space idempotency error.
We therefore focus on enforcing latent-space encoder--decoder consistency as a
practical and sufficient mechanism for achieving idempotent behavior throughout
the model.
 
\paragraph{Idempotency loss.}
 
We define the idempotency loss as follows.
Let $\operatorname{sg}(\cdot)$ denote the stop-gradient operator.
For an input sample $x$, we compute
\begin{align}
z      &= E_\phi(x), \\
\hat{z} &= E_\phi\!\left(D_\theta(\operatorname{sg}(z))\right),
\end{align}
and define
\begin{equation}
\mathcal{L}_{\text{idem}}
=
\mathbb{E}_{x \sim p_{\mathcal{X}}}
\left[
\left\|
\hat{z} - \operatorname{sg}(z)
\right\|_2^2
\right].
\end{equation}
 
This loss penalizes drift in the latent representation, encouraging the encoder to
correctly re-encode samples generated by the decoder.
The stop-gradient on $z$ serves two purposes: it keeps the target fixed during
optimization so the encoder is guided toward a stable attractor, and critically,
it prevents a degenerate solution.
To see why, consider minimizing $\frac{1}{2}(f(x) - f(f(x)))^2$ without the
stop-gradient.
The resulting gradient is $(f(x) - f(f(x)))(1 - f'(f(x)))f'(x)$, which can be
trivially zeroed by setting $f'(z) = 1$ everywhere---collapsing $f$ to the identity
transform, a solution that satisfies the loss while being entirely useless for
generation and editing.
With the stop-gradient, the gradient becomes
$-(\operatorname{sg}(f(x)) - f(f(x)))f'(f(x))f'(x)$, which eliminates this degenerate fixed point and enforces genuine alignment between the encoder and decoder manifolds.

Empirically, removing the stop gradient leads to intermediate performance, improving over the reconstruction baseline but consistently underperforming the full model, confirming the importance of this design choice.
 
\paragraph{Manifold alignment interpretation and sampling.}
 
The idempotency loss enforces the encoder--decoder consistency condition
$E_\phi(D_\theta(E_\phi(x))) \approx E_\phi(x)$.
In expectation, this can be rewritten as
\begin{equation}
    \mathbb{E}_{z \sim q_\phi}
    \left[
        E_\phi(D_\theta(z)) - z
    \right]
    \approx 0,
\end{equation}
where $q_\phi$ denotes the distribution of encoded latent codes $z = E_\phi(x)$.
In practice, this expectation can be approximated by sampling $z \sim \mathcal{N}(0, I)$,
reflecting the implicit prior assumed by the model for continuous latent spaces.
(For discrete-codebook models such as VQ-VAE, the alignment objective is applied
directly in the continuous pre-quantization embedding space, and the Gaussian prior
is an interpretive device for the continuous case rather than a strict assumption
about the discrete codebook.)
Decoding such samples produces synthetic data $\tilde{x} = D_\theta(z)$ that,
when re-encoded, remain close to the original latent codes, aligning the
generative manifold with the data-induced manifold even in regions not covered
by the training distribution.
 
From the perspective of manifold learning, the proposed objective can therefore be
interpreted as an implicit alignment procedure between two complementary views of the
same underlying structure: the data-induced manifold
$\{E_\phi(x) : x \sim p_{\mathcal{X}}\}$ and the generative manifold
$\{D_\theta(z) : z \in \mathcal{Z}\}$.
By enforcing agreement between these two representations, the model learns a latent
space that is smoother, more globally consistent, and stable under iterative
generation.
 
\paragraph{Objective.}
The idempotency regularizer is model agnostic and can be added to any existing
encoder--decoder training objective $\mathcal{L}_{\text{base}}$:
\begin{equation}
    \mathcal{L} = \mathcal{L}_{\text{base}} + \lambda\,\mathcal{L}_{\text{idem}},
    \label{eq:total_loss}
\end{equation}
where $\lambda > 0$ controls the relative weight of the idempotency term.

\begin{figure*}[t]
    \centering
    \includegraphics[width=1\textwidth]{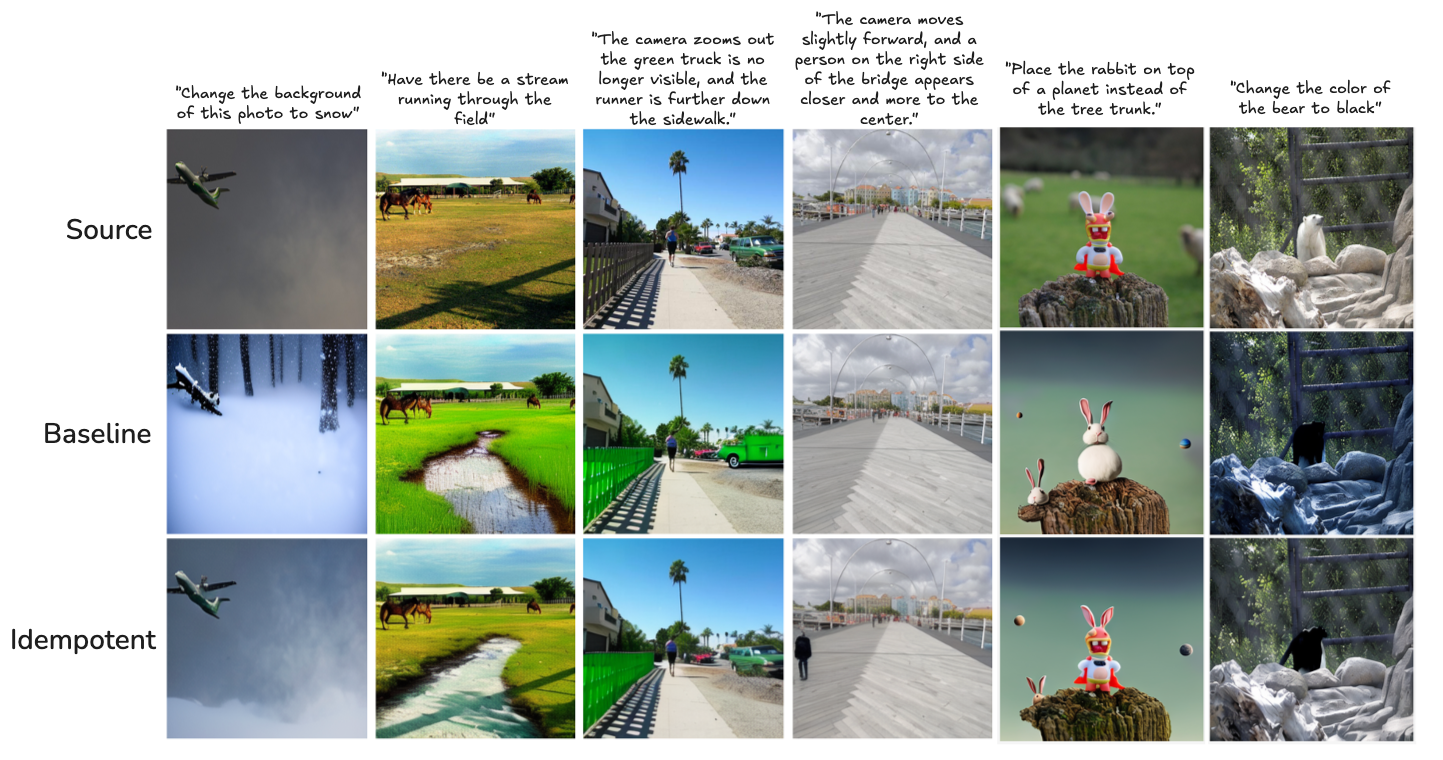}
    
 \caption{Instruction-guided editing using a latent diffusion
model~\citep{brooks2023instructpix2pix} trained with a standard VAE (Baseline)
and our idempotent VAE (Idempotent). Our method better follows editing
instructions while preserving source image structure and details.}
    \label{fig:ldm_samples}
\end{figure*}

\section{Experimental Setup}
\label{sec:experiments}

\subsection{Image Editing}

\paragraph{Synthetic benchmarks.}
We evaluate on Colored MNIST~\citep{lecun1998mnist} and
dSprites~\citep{dsprites17}, two synthetic benchmarks with known ground-truth
generative factors. Both follow a two-stage pipeline: (1)~train an
unconditional VAE to obtain a continuous latent space, then (2)~train a
conditional normalizing flow (CNF) on the frozen encoder means $\mu$ using
learnable factor embeddings as conditions. At inference, factor editing is
performed by mapping a latent through the CNF under the source condition and
inverting under the target condition. We evaluate using Classification
Accuracy Score (CAS~\citep{ravuri2019classification}) and Edit Success
Rate (ESR). Full training details are provided in
Appendix~\ref{sec:synthetic}.

\paragraph{Latent diffusion model.}
We train a latent diffusion model following the InstructPix2Pix
framework~\citep{brooks2023instructpix2pix}, substituting the standard VAE
with our idempotent VAE. We evaluate reconstruction quality and editing quality using the
MagicBrush~\citep{zhang2023magicbrush} benchmark.

\subsection{Idempotent Generation}

We evaluate idempotent generation on CelebA and MNIST.
A VQ-VAE-2~\citep{razavi2019generating} is trained on
CelebA~\citep{liu2018large} and tested on out-of-distribution samples from
LFW~\citep{zheng1708cross} to assess stability under repeated application
(Figure~\ref{fig:teaser}).
Following~\citep{shocher2023idempotent}, we also train a
DCGAN~\citep{radford2015unsupervised} on CelebA and compare our alignment
framework against the IGN baseline under input transformations including
Gaussian noise, grayscale, and sketching (Figure~\ref{fig:celeba}).
Finally, we evaluate on MNIST~\citep{deng2012mnist} using a
VAE~\citep{kingma2013auto}, comparing a standard VAE baseline, VAE~+~IGN,
and VAE~+~Ours via qualitative interpolation and idempotency tests
(Figure~\ref{fig:mnist}).
All implementation details are provided in Appendix~\ref{sec:details}.

\subsection{Evaluation Metrics}

For VAE reconstruction we report PSNR and SSIM (pixel-level fidelity),
LPIPS~\citep{zhang2018unreasonable} (perceptual quality), and
CLIP-I~\citep{radford2021learning} (semantic preservation).
For instruction-based editing we additionally report
DINO~\citep{caron2021emerging} (fine-grained structural preservation) and
FID~\citep{heusel2017gans} (distributional realism of edits).
For synthetic benchmarks we use CAS~\citep{ravuri2019classification}
(class fidelity of generated samples) and ESR (attribute edit accuracy).
For idempotent generation we report FID, MSE, and LPIPS over repeated
reconstruction iterations.

\section{Results}
\label{sec:results}

\begin{table*}[t]
\centering
\caption{Reconstruction stability under repeated encoder--decoder application.
We report mean $\pm$ standard deviation over 30 iterations.
Lower values are better for all metrics.}
\label{tab:reconstruction}
\small
\setlength{\tabcolsep}{6pt}
\renewcommand{\arraystretch}{0.85}
\begin{tabular}{llccc}
\toprule
Dataset & Method & FID $\downarrow$ & MSE $\downarrow$ & LPIPS $\downarrow$ \\
\midrule

\multirow{3}{*}{MNIST}
& VAE
& $1.70 \pm 1.06$
& $0.157 \pm 0.075$
& $0.191 \pm 0.041$ \\

& VAE + IGN
& $30.50 \pm 21.25$
& $0.118 \pm 0.060$
& $0.144 \pm 0.048$ \\

& VAE + Ours
& $\mathbf{1.12 \pm 0.65}$
& $\mathbf{0.140 \pm 0.062}$
& $\mathbf{0.173 \pm 0.031}$ \\

\midrule

\multirow{2}{*}{CelebA}
& IGN~\cite{shocher2023idempotent}
& $54.68 \pm 16.71$
& $0.464 \pm 0.149$
& $0.273 \pm 0.036$ \\

& Ours
& $\mathbf{14.31 \pm 6.74}$
& $\mathbf{0.349 \pm 0.099}$
& $\mathbf{0.226 \pm 0.030}$ \\

\midrule

\multirow{3}{*}{LFW}
& VQ-VAE-2~\cite{razavi2019generating}
& $\mathbf{16.33 \pm 1.49}$
& $0.0073 \pm 0.0026$
& $\mathbf{0.1046 \pm 0.0081}$ \\

& VQ-VAE-2 + IGN
& $18.84 \pm 3.69$
& $0.0071 \pm 0.0027$
& $0.1414 \pm 0.0246$ \\

& VQ-VAE-2 + Ours
& $17.13 \mathbf{\pm 0.27}$
& $\mathbf{0.0038 \pm 0.0001}$
& $0.1252 \pm 0.0018$ \\

\bottomrule
\end{tabular}
\end{table*}

\begin{figure}[t]
    \centering
    \begin{subfigure}[b]{0.45\textwidth}
        \centering
        \includegraphics[width=\textwidth]{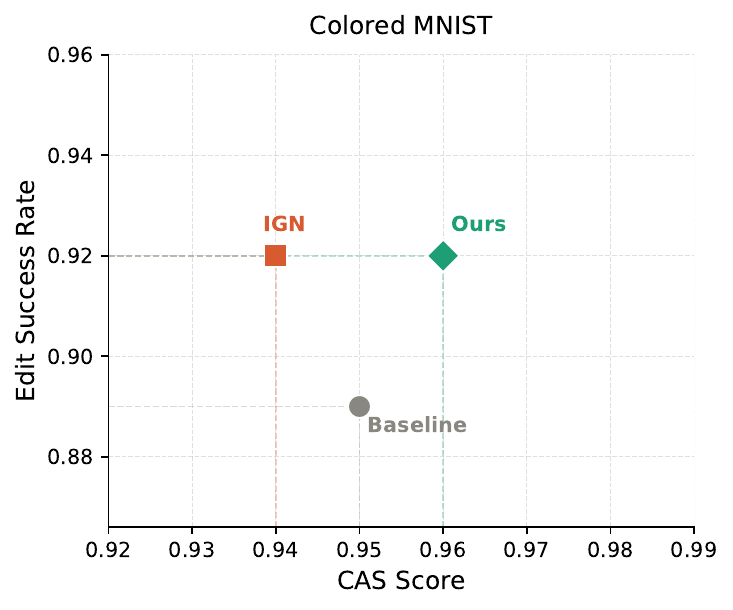}
        \caption{Colored MNIST}
        \label{fig:colored_mnist}
    \end{subfigure}
    \hfill
    \begin{subfigure}[b]{0.45\textwidth}
        \centering
        \includegraphics[width=\textwidth]{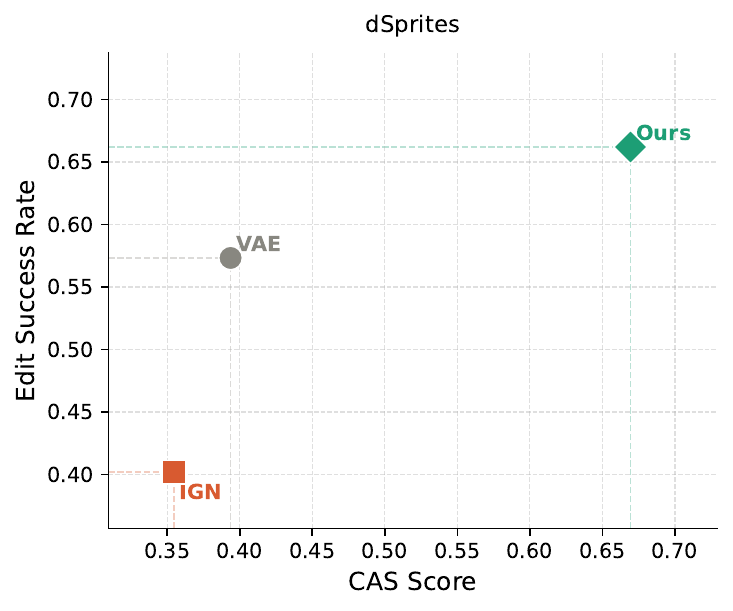}
        \caption{dSprites}
        \label{fig:dsprites}
    \end{subfigure}
    \caption{Attribute editing evaluation on Colored MNIST and dSprites.
    Each point represents a VAE-based variant ( VAE, VAE~+~IGN, and VAE~+~Ours ) 
    scored by Classification Accuracy Score (CAS~\cite{ravuri2019classification})
    and Edit Success Rate (ESR); higher is better for both.}
    \label{fig:editing_eval}
\end{figure}

Figure~\ref{fig:teaser} demonstrates the effect of encoder--decoder alignment under repeated application.
Our method produces stable, artifact-free outputs across 40 iterations, whereas
both the VQ-VAE baseline and IGN exhibit progressive drift, with artifacts and
loss of structure accumulating over iterations. This confirms that alignment is
key to preventing error propagation during iterative generation.
Table~\ref{tab:reconstruction} further supports this quantitatively across
three datasets and model families.
On MNIST, VAE~+~Ours achieves the best FID~(1.12), while applying the IGN
objective alone degrades it substantially~(30.50), showing that the IGN loss
without our alignment framework hurts reconstruction quality.
On CelebA, our method outperforms IGN across all metrics with a particularly
large gap in FID~(14.31 vs.\ 54.68), and on LFW, VQ-VAE-2~+~Ours achieves
the lowest MSE~($0.0038$) and LPIPS~($0.1252$).

Beyond reconstruction, we evaluate whether alignment improves downstream
editing quality.
And it also shows that the idempotent VAE achieves
higher PSNR~($28.96$ vs.\ $28.12$) and SSIM~($0.822$ vs.\ $0.806$) on
MagicBrush reconstruction, while the baseline retains a slight advantage
in LPIPS and CLIP-I, suggesting a trade-off between pixel-level fidelity
and perceptual similarity.
Qualitatively, Figure~\ref{fig:ldm_samples} shows that our idempotent VAE
better follows editing instructions while preserving source image structure
and fine-grained details, a trend confirmed quantitatively in
Table~\ref{tab:magicbrush_metrics}, where the idempotent model achieves higher
DINO~($0.829$ vs.\ $0.796$), lower LPIPS~($0.264$ vs.\ $0.303$), and higher
SSIM~($0.712$ vs.\ $0.660$) on MagicBrush editing.

On synthetic benchmarks, Figure~\ref{fig:editing_eval} shows that VAE~+~Ours
achieves the highest CAS on Colored MNIST~($0.96$ vs.\ $0.95$/$0.94$) at
competitive ESR, and leads substantially on dSprites~($0.68$ vs.\
$0.39$/$0.35$), reflecting stronger preservation of non-target attributes
under repeated manipulation.
Finally, although DCGAN is a relatively simple architecture, our method
produces more consistent outputs and better preserves facial structure
compared to IGN under out-of-distribution input transformations
(Figure~\ref{fig:celeba}).
Further results and ablations are provided in
Appendix~\ref{sec:ablation}.

\begin{table}[h!]                                                                                      
    \centering                                                 
    \caption{Reconstruction and editing metrics on MagicBrush (mean $\pm$ std, $N=535$).                 
    PSNR, CLIP-I, DINO, and SSIM are higher-is-better ($\uparrow$);
    LPIPS is lower-is-better ($\downarrow$).                                                             
    PSNR is only defined for reconstruction; DINO is only computed for editing.
    Baseline uses a standard VAE and Idempotent replaces it with our                                     
    encoder--decoder aligned VAE.}                             
    \label{tab:magicbrush_metrics}                                                                       
    \resizebox{\textwidth}{!}{%
    \begin{tabular}{llccccc}                                                                             
      \toprule
      \textbf{Task} & \textbf{Model} &                                                                   
      \textbf{PSNR}~$\uparrow$ &                               
      \textbf{CLIP-I}~$\uparrow$ &                                                                       
      \textbf{DINO}~$\uparrow$ &
      \textbf{LPIPS}~$\downarrow$ &                                                                      
      \textbf{SSIM}~$\uparrow$ \\                                                                        
      \midrule
      \multirow{2}{*}{Reconstruction}                                                                    
        & Baseline   & $28.12 \pm 4.19$ & $0.965 \pm 0.020$ & ---              & $0.061 \pm 0.030$ &
  $0.806 \pm 0.110$ \\                                                                                   
        & Idempotent & $28.96 \pm 4.07$ & $0.961 \pm 0.022$ & ---              & $0.095 \pm 0.055$ &
  $0.822 \pm 0.104$ \\                                                                                   
      \midrule                                                 
      \multirow{2}{*}{Editing}                                                                           
        & Baseline   & ---              & $0.882 \pm 0.106$ & $0.803 \pm 0.230$ & $0.287 \pm 0.191$ &
  $0.680 \pm 0.175$ \\                                                                                   
        & Idempotent & ---              & $0.892 \pm 0.104$ & $0.828 \pm 0.218$ & $0.257 \pm 0.170$ &
  $0.720 \pm 0.160$ \\                                                                                   
      \bottomrule                                              
    \end{tabular}}                                                                                       
    \label{tab:ldm}                                            
  \end{table}
\section{Related Work}
\label{sec:related}

\subsection{Learning a Representation for Editing  }

Editing in generative models aims to modify specific attributes of a sample while preserving all other underlying factors \citep{abdal2021styleflow,kawar2023imagic,radford2015unsupervised,sheng2025voice}.  A common approach is to learn a representation in which different factors of variation are disentangled, allowing controlled manipulation of selected attributes while keeping the remaining content unchanged. Many editing methods build on this idea by enforcing factorized latent spaces \citep{lin2024voxgenesis,wu2021stylespace,karras2019style} or attribute-specific directions \citep{abdal2021styleflow,anastassiou2024voiceshop}, enabling edits through latent traversal or conditional manipulation. In practice, however, perfect disentanglement is difficult to achieve, and overly constrained representations may limit reconstruction quality or expressiveness\citep{burgess2018understanding, shukor2022semantic, dalva2023image}. Recent editing methods therefore, focus less on strict disentanglement and more on preserving identity and content consistency under transformation \citep{chen2023disenbooth,esser2024scaling,rout2024semantic}. A key challenge in these approaches is maintaining stability, because repeated edits or re-encoding of edited samples often lead to drift, identity degradation, or accumulation of artifacts. This instability suggests that representation quality alone is insufficient and that the interaction between the encoder and decoder plays a critical role in reliable editing.

\subsection{Idempotent Models}

Idempotency has recently been studied as a desirable property in generative models
\citep{shocher2023idempotent,durasov20243,jensenEnforcing,zaman2025scorebased}.
An operator is idempotent if applying it multiple times produces no further change.
In generative modeling, this means that once a sample is mapped onto the target data
manifold, reapplying the model should leave it unchanged.

One of the first approaches is the Idempotent Generative Network (IGN)
\citep{shocher2023idempotent}.
IGN extends the reconstruction objective with two additional terms: an idempotency
loss and a tightening loss.
The idempotency loss enforces consistency under repeated application and is defined as
\begin{equation}
\mathcal{L}_{\mathrm{idem}}(z; \theta, \theta')
= \left\| f_{\theta'}\!\big(f_{\theta}(z)\big) - f_{\theta}(z) \right\|_{1},
\end{equation}
where $f_{\theta}$ denotes the generative model and $f_{\theta'}$ is a copy with
identical weights.
The second application uses a frozen copy of the model, so gradients pass through it
but do not update its parameters.

To avoid trivial solutions in which many points become fixed points, IGN introduces a
tightening loss,
\begin{equation}
\mathcal{L}_{\mathrm{tight}}(z; \theta, \theta')
= - \left\| f_{\theta}\!\big(f_{\theta'}(z)\big) - f_{\theta'}(z) \right\|_{1}.
\end{equation}
This term prevents expansion of the fixed-point set by discouraging arbitrary points
from satisfying $f(y)=y$.

An alternative approach in \cite{zaman2025score} replaces the tightening term with loss functions derived from the probability flow ODE (PF-ODE) of diffusion models, using the flow to characterize and constrain off-manifold samples during training. Points along the diffusion trajectory, corresponding to noisy versions of data, are
treated as structured off-manifold inputs, and the model is trained to map all such
points back to the same data manifold.

Idempotency has also been explored in the context of test-time training.
In \cite{durasov20243}, the authors argue that optimizing for idempotency can serve as a
domain-agnostic replacement for auxiliary tasks commonly used in test-time adaptation,
which are often designed specifically for a particular data modality, such as masking.

More recently, \cite{jensenEnforcing} proposed an alternative approach that enforces
idempotency through a modified backpropagation algorithm.
Motivated by perturbation theory \citep{kato2013perturbation}, their method derives a
polynomial idempotent corrector $g$ such that for an approximately idempotent matrix
$K \in \mathbb{R}^{n \times n}$, the corrected matrix $K' = g(K)$ is exactly idempotent.
One such corrector is
\begin{equation}
g(K) = 3K^{2} - 2K^{3}.
\end{equation}
In practice, this is implemented via the recurrence relation
\begin{equation}
K' = K + \gamma \big(g(K) - K\big), \qquad 0 \leq \gamma \leq 1,
\end{equation}
which takes small steps toward an idempotent operator and can be incorporated into
training as a modification of the standard backpropagation update. In contrast to these methods, our approach enforces idempotency in the latent space rather than directly
in the data domain.

\section{Limitations}
\label{sec:limit}


The additional decode--encode cycle required by the idempotency loss increases
training time and memory usage relative to the standard reconstruction
objective.
While our manifold alignment interpretation suggests that sampling from the
prior and aligning decoded samples back to the latent space could further
improve generalization, this direction has not been explored and may unlock
additional benefits beyond what is demonstrated here.
The current evaluation is limited to small- and medium-scale models and
datasets; whether the benefits of encoder--decoder alignment persist and scale
to large-scale architectures and training regimes remains an open question.
Finally, while performance is stable across a broad range of $\lambda$ values
(Figure~\ref{fig:mnist_sweep}), some tuning may still be required when
applying the method to new architectures.
\section{Conclusion}
\label{sec:conclusion}


We introduced a simple and general training framework for enforcing
encoder--decoder idempotency in latent variable generative models.
By adding a single regularization term that aligns the manifolds learned
by the encoder and decoder, we eliminate the geometric mismatch that causes
drift under repeated application.
We proved that idempotency is a necessary condition for the encoder--decoder
composition to act as an optimal projection, and that latent-space consistency
is sufficient to enforce it.
Across image generation and editing benchmarks, our method consistently
reduces idempotency error, improves reconstruction fidelity, and yields
higher edit success rates and better identity preservation compared to
existing approaches.

We believe encoder--decoder alignment deserves to be treated as a
first-class design objective in generative modeling, on par with
reconstruction quality and disentanglement.
Several directions remain open. Scaling to large-scale diffusion and
flow-matching models would test whether alignment benefits persist in
more complex pipelines. Incorporating prior sampling during training
could further close the gap between the data-induced and generative
manifolds, improving robustness to out-of-distribution inputs. Extending
the framework to speech and video modalities, where encoder--decoder drift
is equally problematic, is a natural next step. More broadly, the
connection between idempotency and optimal projection suggests deeper
theoretical links to self-supervised representation learning that are
worth pursuing.




\newpage

\bibliographystyle{abbrvnat}
\bibliography{references}

@inproceedings{
zaman2025scorebased,
title={Score-based Idempotent Distillation of Diffusion Models},
author={Shehtab Zaman and Chengyan Liu and Kenneth Chiu},
booktitle={NeurIPS 2025 Workshop on Structured Probabilistic Inference {\&} Generative Modeling},
year={2025},
url={https://openreview.net/forum?id=CASV1aXAdc}
}

@article{shocher2023idempotent,
  title={Idempotent generative network},
  author={Shocher, Assaf and Dravid, Amil and Gandelsman, Yossi and Mosseri, Inbar and Rubinstein, Michael and Efros, Alexei A},
  journal={arXiv preprint arXiv:2311.01462},
  year={2023}
}

@article{zaman2025score,
  title={Score-based Idempotent Distillation of Diffusion Models},
  author={Zaman, Shehtab and Liu, Chengyan and Chiu, Kenneth},
  journal={arXiv preprint arXiv:2509.21470},
  year={2025}
}

@inproceedings{jensenenforcing,
  title={Enforcing Idempotency in Neural Networks},
  author={Jensen, Nikolaj Banke and Vicary, Jamie},
  booktitle={Forty-second International Conference on Machine Learning},
  year={2025}
}

@article{anastassiou2024voiceshop,
  title={Voiceshop: A unified speech-to-speech framework for identity-preserving zero-shot voice editing},
  author={Anastassiou, Philip and Tang, Zhenyu and Peng, Kainan and Jia, Dongya and Li, Jiaxin and Tu, Ming and Wang, Yuping and Wang, Yuxuan and Ma, Mingbo},
  journal={arXiv preprint arXiv:2404.06674},
  year={2024}
}

@article{abdal2021styleflow,
  title={Styleflow: Attribute-conditioned exploration of stylegan-generated images using conditional continuous normalizing flows},
  author={Abdal, Rameen and Zhu, Peihao and Mitra, Niloy J and Wonka, Peter},
  journal={ACM Transactions on Graphics (ToG)},
  volume={40},
  number={3},
  pages={1--21},
  year={2021},
  publisher={ACM New York, NY}
}

@article{lin2024voxgenesis,
  title={VoxGenesis: Unsupervised Discovery of Latent Speaker Manifold for Speech Synthesis},
  author={Lin, Weiwei and He, Chenhang and Mak, Man-Wai and Lian, Jiachen and Lee, Kong Aik},
  journal={arXiv preprint arXiv:2403.00529},
  year={2024}
}

@inproceedings{avrahami2025stable,
  title={Stable flow: Vital layers for training-free image editing},
  author={Avrahami, Omri and Patashnik, Or and Fried, Ohad and Nemchinov, Egor and Aberman, Kfir and Lischinski, Dani and Cohen-Or, Daniel},
  booktitle={Proceedings of the Computer Vision and Pattern Recognition Conference},
  pages={7877--7888},
  year={2025}
}

@inproceedings{yang2025videograin,
  title={Videograin: Modulating space-time attention for multi-grained video editing},
  author={Yang, Xiangpeng and Zhu, Linchao and Fan, Hehe and Yang, Yi},
  booktitle={The Thirteenth International Conference on Learning Representations},
  year={2025}
}

@article{sheng2025voice,
  title={Voice attribute editing with text prompt},
  author={Sheng, Zheng-Yan and Liu, Li-Juan and Ai, Yang and Pan, Jia and Ling, Zhen-Hua},
  journal={IEEE Transactions on Audio, Speech and Language Processing},
  year={2025},
  publisher={IEEE}
}

@article{bengio2013representation,
  title={Representation learning: A review and new perspectives},
  author={Bengio, Yoshua and Courville, Aaron and Vincent, Pascal},
  journal={IEEE transactions on pattern analysis and machine intelligence},
  volume={35},
  number={8},
  pages={1798--1828},
  year={2013},
  publisher={IEEE}
}

@inproceedings{pan2025counterfactual,
  title={Counterfactual image editing with disentangled causal latent space},
  author={Pan, Yushu and Bareinboim, Elias},
  booktitle={The Thirty-ninth Annual Conference on Neural Information Processing Systems},
  year={2025}
}

@article{chen2016infogan,
  title={Infogan: Interpretable representation learning by information maximizing generative adversarial nets},
  author={Chen, Xi and Duan, Yan and Houthooft, Rein and Schulman, John and Sutskever, Ilya and Abbeel, Pieter},
  journal={Advances in neural information processing systems},
  volume={29},
  year={2016}
}

@article{burgess2018understanding,
  title={Understanding disentangling in {$\beta$}-VAE},
  author={Burgess, Christopher P and Higgins, Irina and Pal, Arka and Matthey, Loic and Watters, Nick and Desjardins, Guillaume and Lerchner, Alexander},
  journal={arXiv preprint arXiv:1804.03599},
  year={2018}
}

@inproceedings{shukor2022semantic,
  title={Semantic unfolding of stylegan latent space},
  author={Shukor, Mustafa and Yao, Xu and Damodaran, Bharath Bushan and Hellier, Pierre},
  booktitle={2022 IEEE International Conference on Image Processing (ICIP)},
  pages={221--225},
  year={2022},
  organization={IEEE}
}

@article{dalva2023image,
  title={Image-to-image translation with disentangled latent vectors for face editing},
  author={Dalva, Yusuf and Pehlivan, Hamza and Hatipoglu, Oyku Irmak and Moran, Cansu and Dundar, Aysegul},
  journal={IEEE transactions on pattern analysis and machine intelligence},
  volume={45},
  number={12},
  pages={14777--14788},
  year={2023},
  publisher={IEEE}
}

@inproceedings{kawar2023imagic,
  title={Imagic: Text-based real image editing with diffusion models},
  author={Kawar, Bahjat and Zada, Shiran and Lang, Oran and Tov, Omer and Chang, Huiwen and Dekel, Tali and Mosseri, Inbar and Irani, Michal},
  booktitle={Proceedings of the IEEE/CVF conference on computer vision and pattern recognition},
  pages={6007--6017},
  year={2023}
}

@article{radford2015unsupervised,
  title={Unsupervised representation learning with deep convolutional generative adversarial networks},
  author={Radford, Alec},
  journal={arXiv preprint arXiv:1511.06434},
  year={2015}
}

@inproceedings{wu2021stylespace,
  title={Stylespace analysis: Disentangled controls for stylegan image generation},
  author={Wu, Zongze and Lischinski, Dani and Shechtman, Eli},
  booktitle={Proceedings of the IEEE/CVF conference on computer vision and pattern recognition},
  pages={12863--12872},
  year={2021}
}

@inproceedings{karras2019style,
  title={A style-based generator architecture for generative adversarial networks},
  author={Karras, Tero and Laine, Samuli and Aila, Timo},
  booktitle={Proceedings of the IEEE/CVF conference on computer vision and pattern recognition},
  pages={4401--4410},
  year={2019}
}

@article{chen2023disenbooth,
  title={Disenbooth: Identity-preserving disentangled tuning for subject-driven text-to-image generation},
  author={Chen, Hong and Zhang, Yipeng and Wu, Simin and Wang, Xin and Duan, Xuguang and Zhou, Yuwei and Zhu, Wenwu},
  journal={arXiv preprint arXiv:2305.03374},
  year={2023}
}

@article{rout2024semantic,
  title={Semantic image inversion and editing using rectified stochastic differential equations},
  author={Rout, Litu and Chen, Yujia and Ruiz, Nataniel and Caramanis, Constantine and Shakkottai, Sanjay and Chu, Wen-Sheng},
  journal={arXiv preprint arXiv:2410.10792},
  year={2024}
}

@inproceedings{esser2024scaling,
  title={Scaling rectified flow transformers for high-resolution image synthesis},
  author={Esser, Patrick and Kulal, Sumith and Blattmann, Andreas and Entezari, Rahim and M{\"u}ller, Jonas and Saini, Harry and Levi, Yam and Lorenz, Dominik and Sauer, Axel and Boesel, Frederic and others},
  booktitle={Forty-first international conference on machine learning},
  year={2024}
}

@article{durasov20243,
  title={IT $\hat{3}$: Idempotent Test-Time Training},
  author={Durasov, Nikita and Shocher, Assaf and Oner, Doruk and Chechik, Gal and Efros, Alexei A and Fua, Pascal},
  journal={arXiv preprint arXiv:2410.04201},
  year={2024}
}

@book{kato2013perturbation,
  title={Perturbation theory for linear operators},
  author={Kato, Tosio},
  volume={132},
  year={2013},
  publisher={Springer Science \& Business Media}
}

@article{razavi2019generating,
  title={Generating diverse high-fidelity images with vq-vae-2},
  author={Razavi, Ali and Van den Oord, Aaron and Vinyals, Oriol},
  journal={Advances in neural information processing systems},
  volume={32},
  year={2019}
}

@article{kingma2013auto,
  title={Auto-encoding variational bayes},
  author={Kingma, Diederik P and Welling, Max},
  journal={arXiv preprint arXiv:1312.6114},
  year={2013}
}

@article{zheng1708cross,
  title={Cross-age lfw: A database for studying cross-age face recognition in unconstrained environments. arXiv 2017},
  author={Zheng, Tianyue and Deng, Weihong and Hu, Jiani},
  journal={arXiv preprint arXiv:1708.08197}
}

@article{liu2018large,
  title={Large-scale celebfaces attributes (celeba) dataset},
  author={Liu, Ziwei and Luo, Ping and Wang, Xiaogang and Tang, Xiaoou},
  journal={Retrieved August},
  volume={15},
  number={2018},
  pages={11},
  year={2018}
}

@article{deng2012mnist,
  title={The mnist database of handwritten digit images for machine learning research [best of the web]},
  author={Deng, Li},
  journal={IEEE signal processing magazine},
  volume={29},
  number={6},
  pages={141--142},
  year={2012},
  publisher={IEEE}
}

@inproceedings{rombach2022high,
  title={High-resolution image synthesis with latent diffusion models},
  author={Rombach, Robin and Blattmann, Andreas and Lorenz, Dominik and Esser, Patrick and Ommer, Bj{\"o}rn},
  booktitle={Proceedings of the IEEE/CVF conference on computer vision and pattern recognition},
  pages={10684--10695},
  year={2022}
}

@article{liu2022flow,
  title={Flow straight and fast: Learning to generate and transfer data with rectified flow},
  author={Liu, Xingchao and Gong, Chengyue and Liu, Qiang},
  journal={arXiv preprint arXiv:2209.03003},
  year={2022}
}

@inproceedings{shen2023naturalspeech,
  title={Naturalspeech 2: Latent diffusion models are natural and zero-shot speech and singing synthesizers},
  author={Shen, Kai and Ju, Zeqian and Tan, Xu and Liu, Eric and Leng, Yichong and He, Lei and Qin, Tao and Bian, Jiang and others},
  booktitle={The Twelfth International Conference on Learning Representations},
  year={2023}
}

@article{ho2022video,
  title={Video diffusion models},
  author={Ho, Jonathan and Salimans, Tim and Gritsenko, Alexey and Chan, William and Norouzi, Mohammad and Fleet, David J},
  journal={Advances in neural information processing systems},
  volume={35},
  pages={8633--8646},
  year={2022}
}

@article{lipman2022flow,
  title={Flow matching for generative modeling},
  author={Lipman, Yaron and Chen, Ricky TQ and Ben-Hamu, Heli and Nickel, Maximilian and Le, Matt},
  journal={arXiv preprint arXiv:2210.02747},
  year={2022}
}

@book{taylor1980,
  title     = {Introduction to Functional Analysis},
  author    = {Taylor, Angus E. and Lay, David C.},
  edition   = {2nd},
  year      = {1980},
  publisher = {John Wiley \& Sons},
  address   = {New York},
  isbn      = {0471846465}
}

@book{deutsch2001best,
  title={Best approximation in inner product spaces},
  author={Deutsch, Frank and Deutsch, F},
  volume={7},
  year={2001},
  publisher={Springer}
}

@book{federer2014geometric,
  title={Geometric measure theory},
  author={Federer, Herbert},
  year={2014},
  publisher={Springer}
}

@inproceedings{zhu2017unpaired,
  title={Unpaired image-to-image translation using cycle-consistent adversarial networks},
  author={Zhu, Jun-Yan and Park, Taesung and Isola, Phillip and Efros, Alexei A},
  booktitle={Proceedings of the IEEE international conference on computer vision},
  pages={2223--2232},
  year={2017}
}

@inproceedings{kim2025reflex,
  title={Reflex: Text-guided editing of real images in rectified flow via mid-step feature extraction and attention adaptation},
  author={Kim, Jimyeong and Park, Jungwon and Song, Yeji and Kwak, Nojun and Rhee, Wonjong},
  booktitle={Proceedings of the IEEE/CVF International Conference on Computer Vision},
  pages={15939--15948},
  year={2025}
}

@inproceedings{lupacscu2026optimal,
  title={Optimal Transport for Rectified Flow Image Editing: Unifying Inversion-Based and Direct Methods},
  author={Lupa{\c{s}}cu, Marian and Stupariu, Mihai Sorin},
  booktitle={Proceedings of the IEEE/CVF Winter Conference on Applications of Computer Vision},
  pages={6764--6774},
  year={2026}
}

@inproceedings{wu2025latentps,
  title={LatentPS: Image Editing Using Latent Representations in Diffusion Models},
  author={Wu, Zilong and Murata, Hideki and Takahashi, Nayu and Wu, Qiyu and Tsuruoka, Yoshimasa},
  booktitle={Proceedings of the Winter Conference on Applications of Computer Vision},
  pages={167--176},
  year={2025}
}

@inproceedings{hu2024latent,
  title={Latent space editing in transformer-based flow matching},
  author={Hu, Vincent Tao and Zhang, Wei and Tang, Meng and Mettes, Pascal and Zhao, Deli and Snoek, Cees},
  booktitle={Proceedings of the AAAI conference on artificial intelligence},
  volume={38},
  number={3},
  pages={2247--2255},
  year={2024}
}

@article{englesson2021consistency,
  title={Consistency regularization can improve robustness to label noise},
  author={Englesson, Erik and Azizpour, Hossein},
  journal={arXiv preprint arXiv:2110.01242},
  year={2021}
}

@inproceedings{chen2021exploring,
  title={Exploring simple siamese representation learning},
  author={Chen, Xinlei and He, Kaiming},
  booktitle={Proceedings of the IEEE/CVF conference on computer vision and pattern recognition},
  pages={15750--15758},
  year={2021}
}

@article{grill2020bootstrap,
  title={Bootstrap your own latent-a new approach to self-supervised learning},
  author={Grill, Jean-Bastien and Strub, Florian and Altch{\'e}, Florent and Tallec, Corentin and Richemond, Pierre and Buchatskaya, Elena and Doersch, Carl and Avila Pires, Bernardo and Guo, Zhaohan and Gheshlaghi Azar, Mohammad and others},
  journal={Advances in neural information processing systems},
  volume={33},
  pages={21271--21284},
  year={2020}
}

@inproceedings{brooks2023instructpix2pix,
  title={Instructpix2pix: Learning to follow image editing instructions},
  author={Brooks, Tim and Holynski, Aleksander and Efros, Alexei A},
  booktitle={Proceedings of the IEEE/CVF conference on computer vision and pattern recognition},
  pages={18392--18402},
  year={2023}
}

@article{zhang2023magicbrush,
  title={Magicbrush: A manually annotated dataset for instruction-guided image editing},
  author={Zhang, Kai and Mo, Lingbo and Chen, Wenhu and Sun, Huan and Su, Yu},
  journal={Advances in Neural Information Processing Systems},
  volume={36},
  pages={31428--31449},
  year={2023}
}

@misc{dsprites17,
author = {Loic Matthey and Irina Higgins and Demis Hassabis and Alexander Lerchner},
title = {dSprites: Disentanglement testing Sprites dataset},
howpublished= {https://github.com/deepmind/dsprites-dataset/},
year = "2017",
}

@article{lecun1998mnist,
  title={The MNIST database of handwritten digits},
  author={LeCun, Yann and Cortes, Corinna and Burges, Christopher J.C.},
  journal={Neural Networks},
  volume={10},
  number={6},
  pages={1117--1119},
  year={1998},
  url={http://yann.lecun.com/exdb/mnist/}
}

@article{ravuri2019classification,
  title={Classification accuracy score for conditional generative models},
  author={Ravuri, Suman and Vinyals, Oriol},
  journal={Advances in neural information processing systems},
  volume={32},
  year={2019}
}

@inproceedings{zhang2018unreasonable,
  title={The unreasonable effectiveness of deep features as a perceptual metric},
  author={Zhang, Richard and Isola, Phillip and Efros, Alexei A and Shechtman, Eli and Wang, Oliver},
  booktitle={Proceedings of the IEEE conference on computer vision and pattern recognition},
  pages={586--595},
  year={2018}
}

@inproceedings{radford2021learning,
  title={Learning transferable visual models from natural language supervision},
  author={Radford, Alec and Kim, Jong Wook and Hallacy, Chris and Ramesh, Aditya and Goh, Gabriel and Agarwal, Sandhini and Sastry, Girish and Askell, Amanda and Mishkin, Pamela and Clark, Jack and others},
  booktitle={International conference on machine learning},
  pages={8748--8763},
  year={2021},
  organization={PmLR}
}

@inproceedings{caron2021emerging,
  title={Emerging properties in self-supervised vision transformers},
  author={Caron, Mathilde and Touvron, Hugo and Misra, Ishan and J{\'e}gou, Herv{\'e} and Mairal, Julien and Bojanowski, Piotr and Joulin, Armand},
  booktitle={Proceedings of the IEEE/CVF international conference on computer vision},
  pages={9650--9660},
  year={2021}
}

@article{heusel2017gans,
  title={Gans trained by a two time-scale update rule converge to a local nash equilibrium},
  author={Heusel, Martin and Ramsauer, Hubert and Unterthiner, Thomas and Nessler, Bernhard and Hochreiter, Sepp},
  journal={Advances in neural information processing systems},
  volume={30},
  year={2017}
}

\newpage
\appendix



\section{Additional Results}
\label{sec:ablation}

\paragraph{Latent interpolation.}
Figure~\ref{fig:mnist} compares latent interpolations across VAE, VAE~+~IGN,
and VAE~+~Ours on MNIST. Encoder--decoder alignment enables smooth, semantically
meaningful transitions: digits change gradually from 7 to 2 with clear intermediate
samples. Baseline and IGN interpolations contain noisy or ambiguous intermediates,
indicating latent space instability. This suggests alignment improves not only
idempotency but also the global geometry of the latent space.

\paragraph{Robustness to input transformations.}
Figure~\ref{fig:celeba} evaluates DCGAN under input transformations (Gaussian
noise, grayscale, and sketching) following~\cite{shocher2023idempotent}.
Our method produces more consistent outputs and better preserves facial structure
under repeated application compared to IGN.

\paragraph{Idempotency loss weight.}
Figure~\ref{fig:mnist_sweep} ablates the idempotency loss weight $\lambda$ on
MNIST. Performance is stable across a wide range of values, with higher $\lambda$
generally improving FID while maintaining competitive LPIPS.

\paragraph{Editing quality.}
Figure~\ref{fig:colored_mnist_com} shows qualitative editing results on Colored
MNIST. VAE~+~Ours preserves digit identity and color fidelity under repeated
application, whereas Baseline and IGN exhibit color drift and digit distortion.
Figure~\ref{fig:diff_edit} compares a latent diffusion model trained on idempotent
versus standard VAE representations. The idempotent representation preserves finer
details, including object positions, grass color, and sky color across edited scenes.

\paragraph{Per-factor CAS.}
Figure~\ref{fig:cas_heatmap} breaks down CAS per generative factor on dSprites
and Colored MNIST. Our method achieves competitive CAS across most factors, with
the largest remaining gap on orientation in dSprites.
Figure~\ref{fig:qualitative} shows qualitative reconstruction comparisons across
VAE, VAE~+~IGN, and VAE~+~Ours, confirming that our method produces more
consistent and stable reconstructions.

\begin{figure}[!http]
    \centering
    \includegraphics[width=\linewidth]{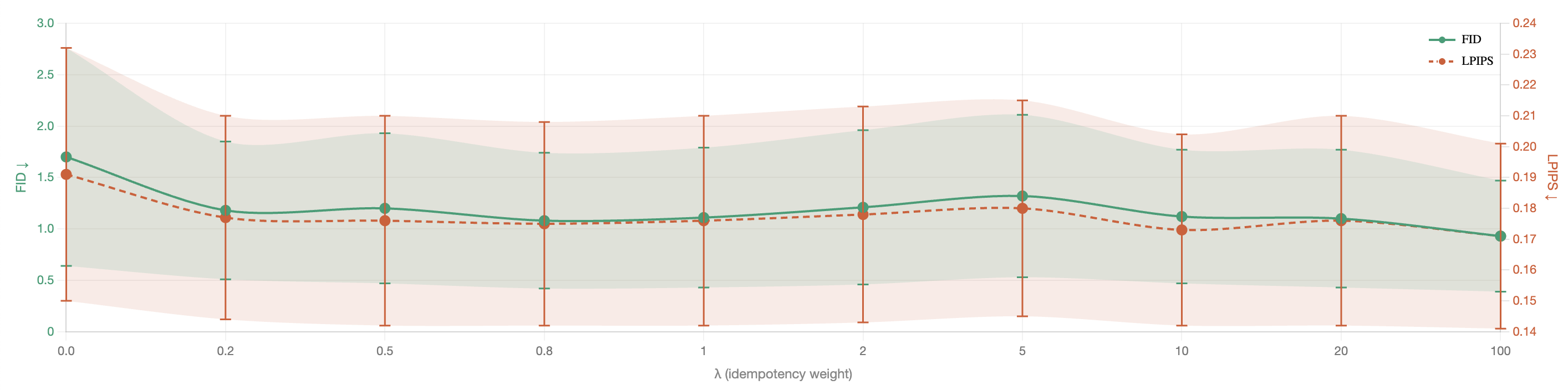}
    \caption{Ablation of the idempotency loss weight $\lambda$ on MNIST using
    VAE~+~Ours. FID and LPIPS are reported as mean~$\pm$~std over 30 iterations.
    Higher $\lambda$ generally improves FID while maintaining competitive LPIPS.}
    \label{fig:mnist_sweep}
\end{figure}

\begin{figure*}[t]
    \centering
    \includegraphics[width=\linewidth]{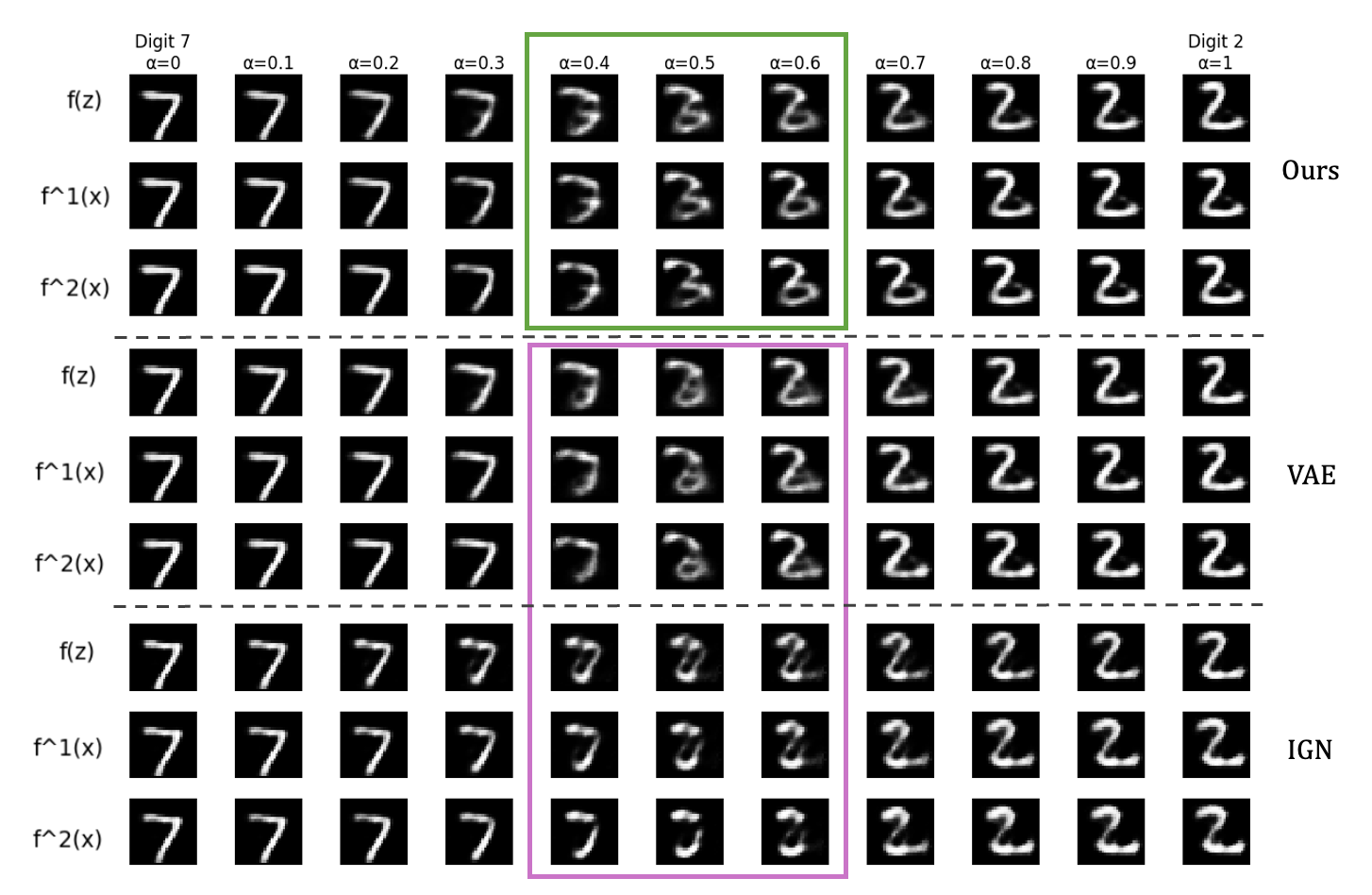}
    \caption{Latent interpolation under repeated generation on MNIST. Each column
    shows interpolation coefficient $\alpha$ between digits 7 and 2; each row shows
    repeated application of the same generator. Our method produces smooth, stable
    interpolations compared to VAE and IGN.}
    \label{fig:mnist}
\end{figure*}

\begin{figure*}[t]
    \centering
    \includegraphics[width=\linewidth]{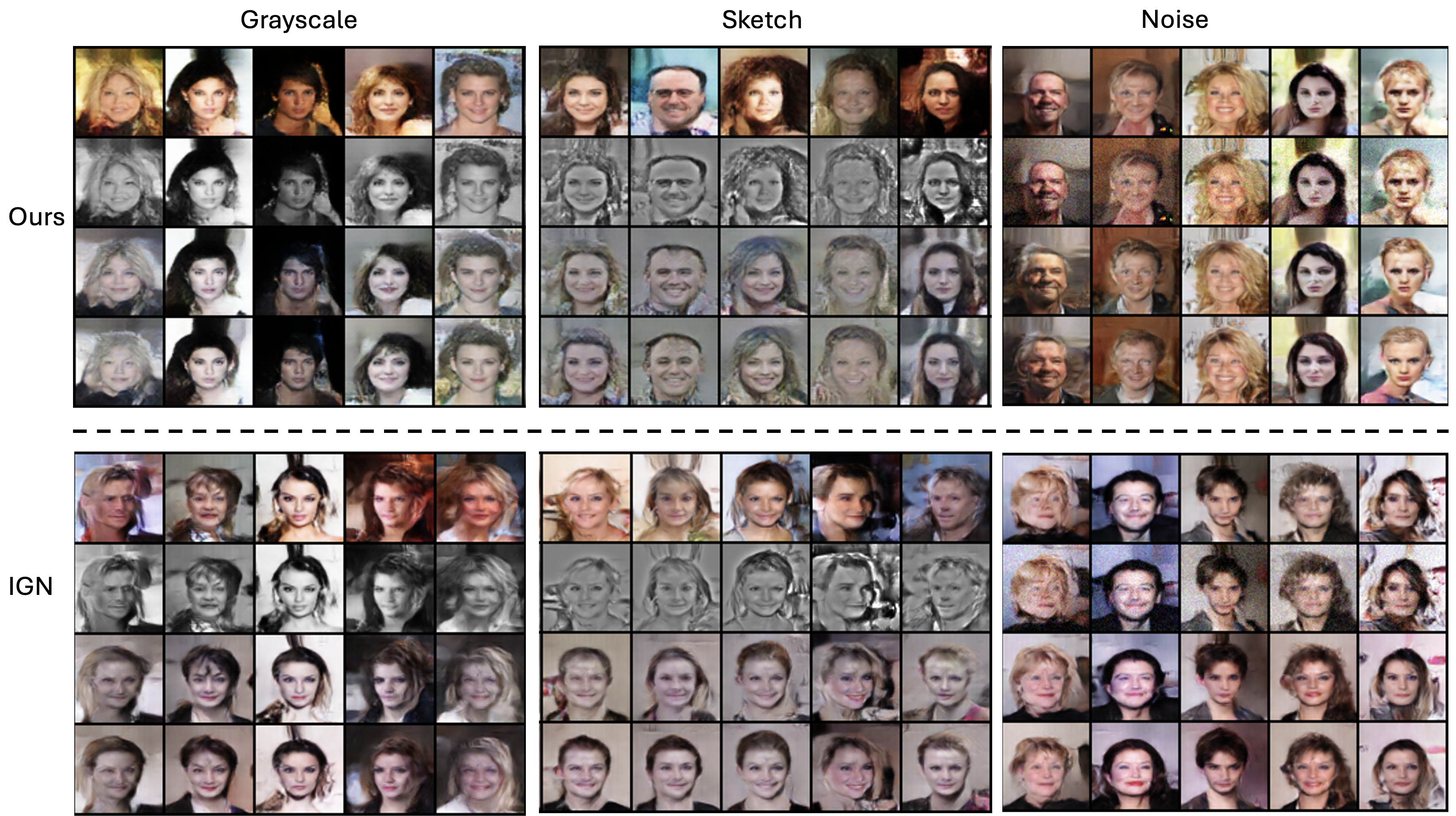}
    \caption{Idempotency under input transformations on CelebA using DCGAN.
    First row: generated images. Second row: transformed inputs (grayscale, sketch,
    noise) following~\cite{shocher2023idempotent}. Third and fourth rows: first and
    second re-applications of the model. Our method produces more consistent outputs
    and preserves facial structure better than IGN.}
    \label{fig:celeba}
\end{figure*}

\begin{figure*}[t]
    \centering
    \includegraphics[width=\linewidth]{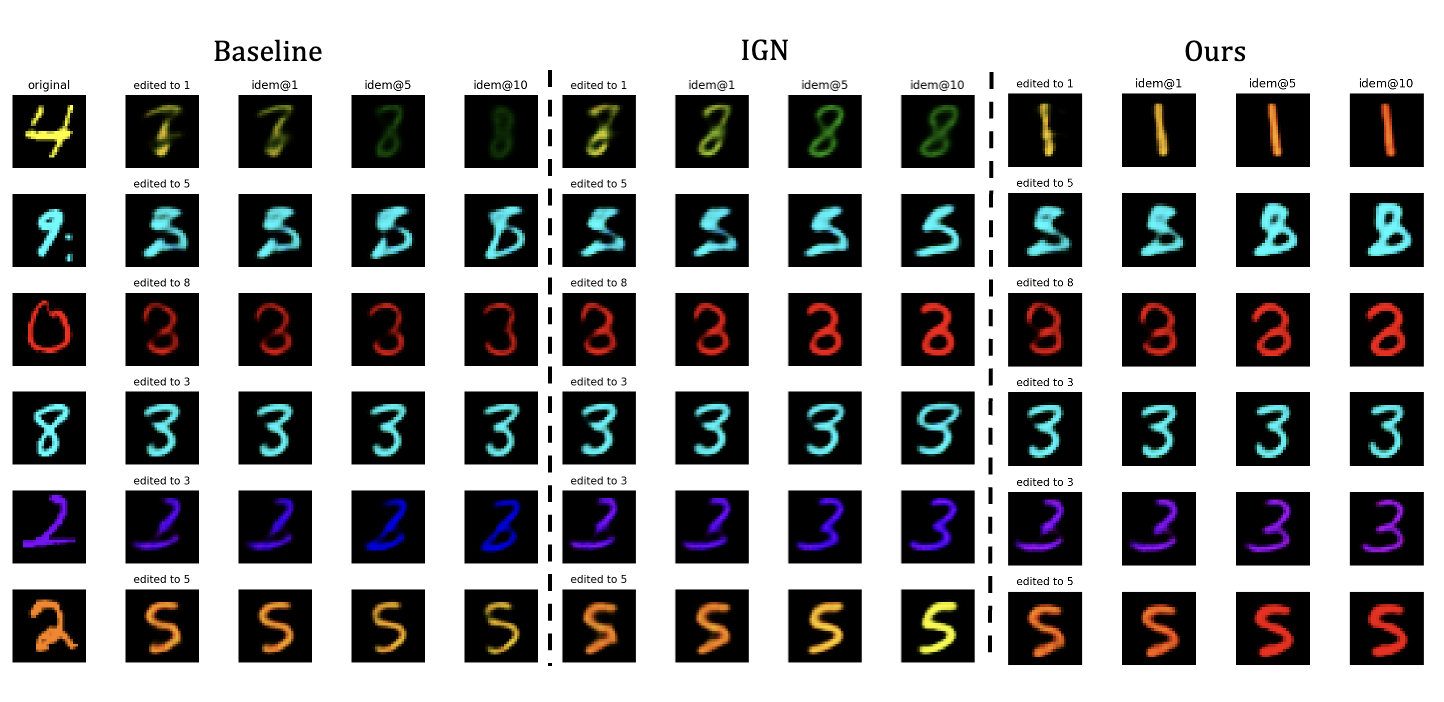}
    \caption{Qualitative editing results on Colored MNIST. Each row shows an
    original image, the edited result, and repeated applications at $f^{1}(x)$,
    $f^{5}(x)$, and $f^{10}(x)$. Baseline and IGN exhibit color drift and digit
    distortion, whereas our method preserves both digit identity and color
    fidelity across iterations.}
    \label{fig:colored_mnist_com}
\end{figure*}

\begin{figure}[t]
    \centering
    \includegraphics[width=\linewidth]{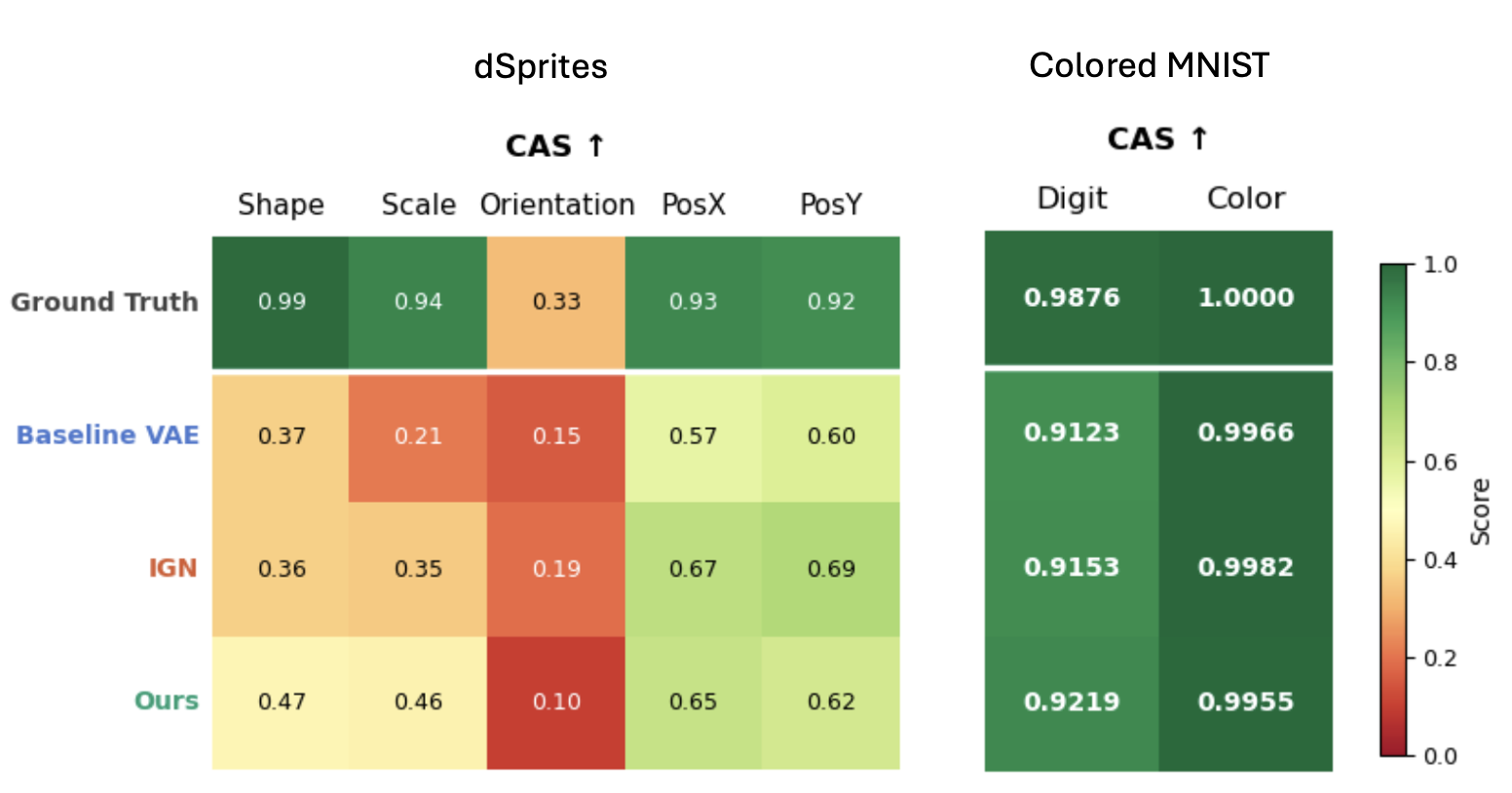}
    \caption{Per-factor CAS breakdown on dSprites and Colored MNIST. Ground Truth
    serves as an upper bound. Our method is competitive across most factors, with
    the largest remaining gap on orientation in dSprites.}
    \label{fig:cas_heatmap}
\end{figure}

\begin{figure}[t]
    \centering
    \begin{subfigure}[b]{0.32\textwidth}
        \centering
        \includegraphics[width=\textwidth]{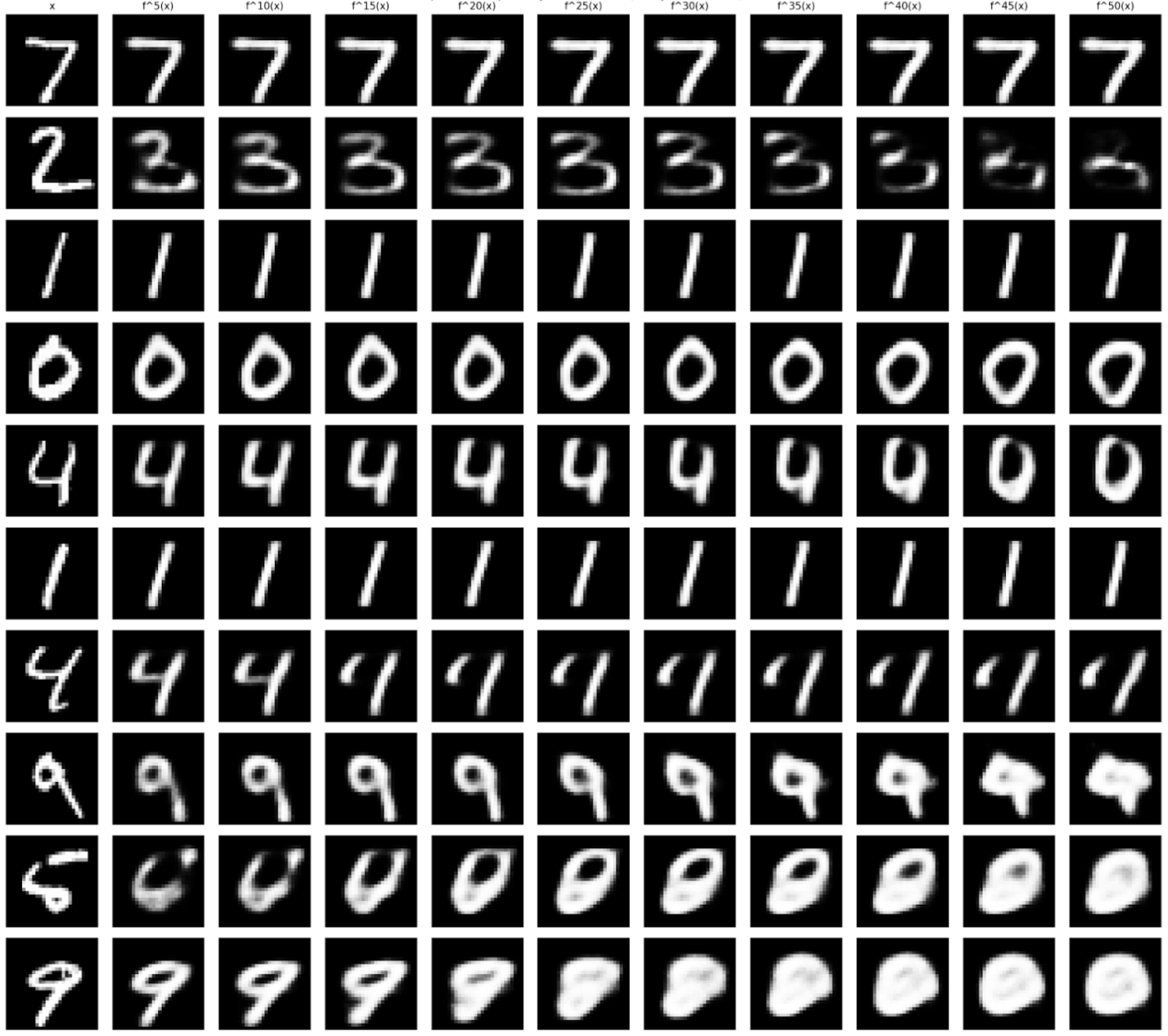}
        \caption{VAE}
        \label{fig:vae}
    \end{subfigure}
    \hfill
    \begin{subfigure}[b]{0.32\textwidth}
        \centering
        \includegraphics[width=\textwidth]{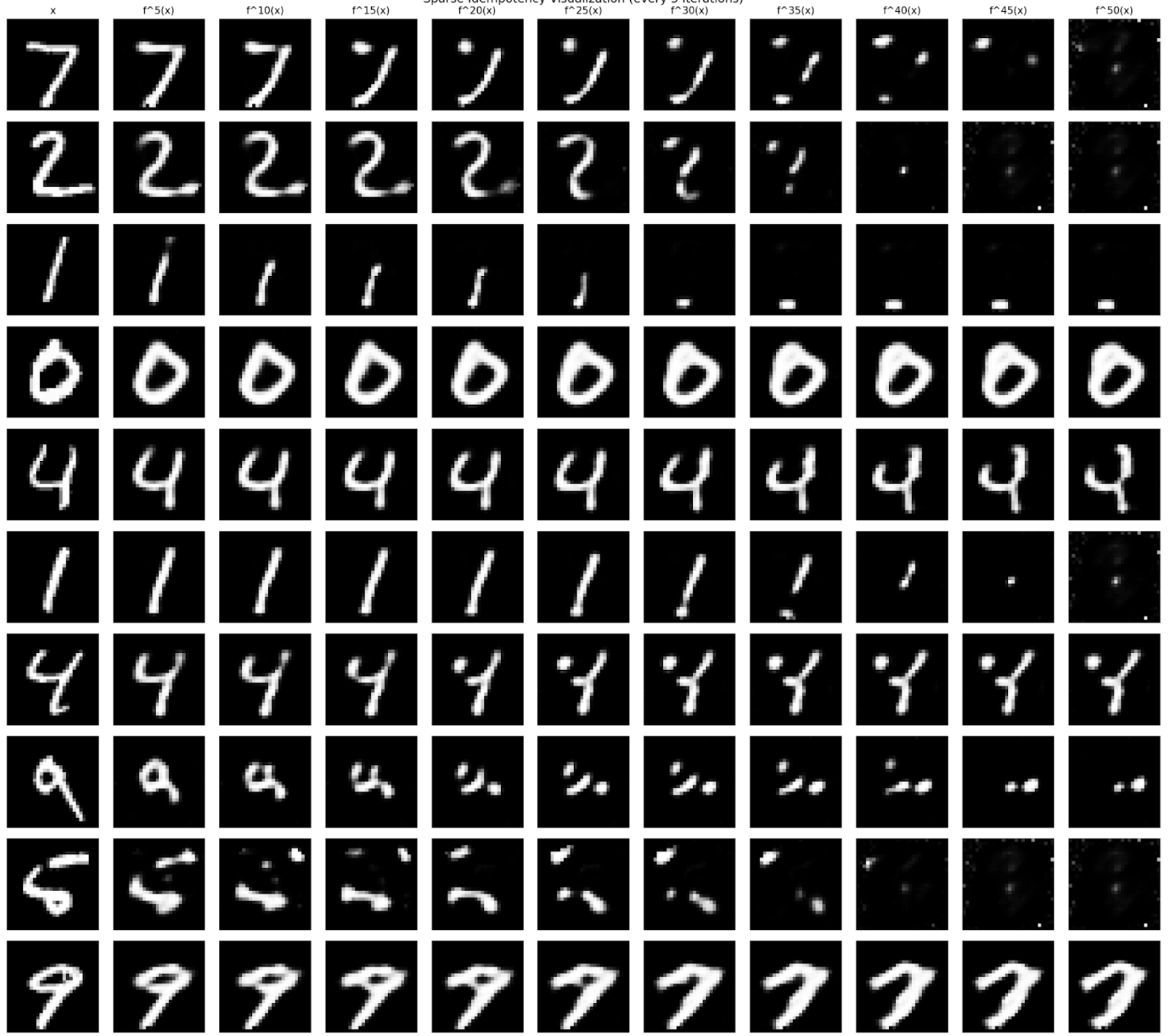}
        \caption{VAE + IGN}
        \label{fig:ign}
    \end{subfigure}
    \hfill
    \begin{subfigure}[b]{0.32\textwidth}
        \centering
        \includegraphics[width=\textwidth]{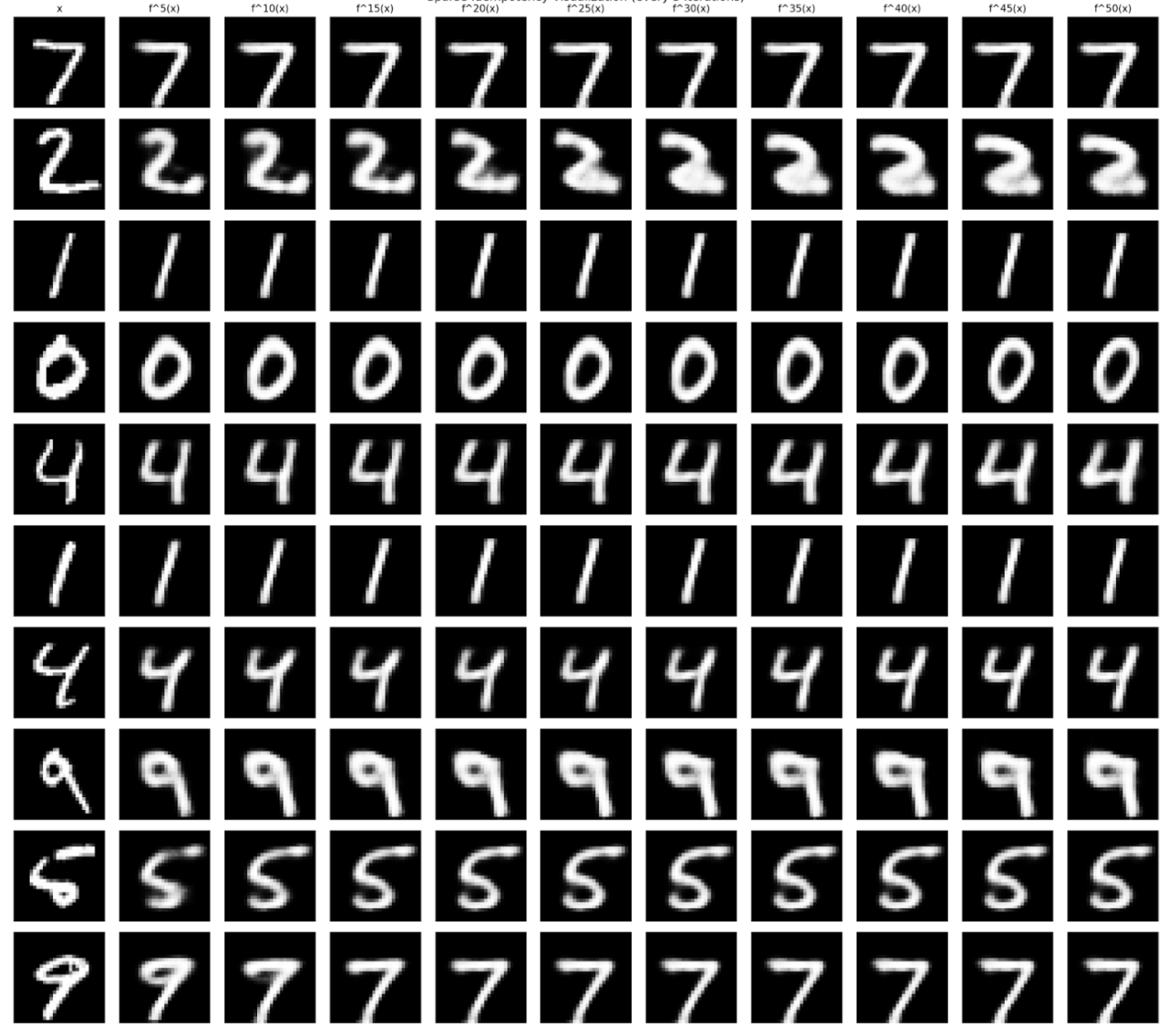}
        \caption{VAE + Ours}
        \label{fig:ours}
    \end{subfigure}
    \caption{Qualitative reconstruction comparison on MNIST. Our method produces
    more consistent and stable reconstructions across iterations.}
    \label{fig:qualitative}
\end{figure}

\begin{figure*}[t]
    \centering
    \includegraphics[width=\linewidth]{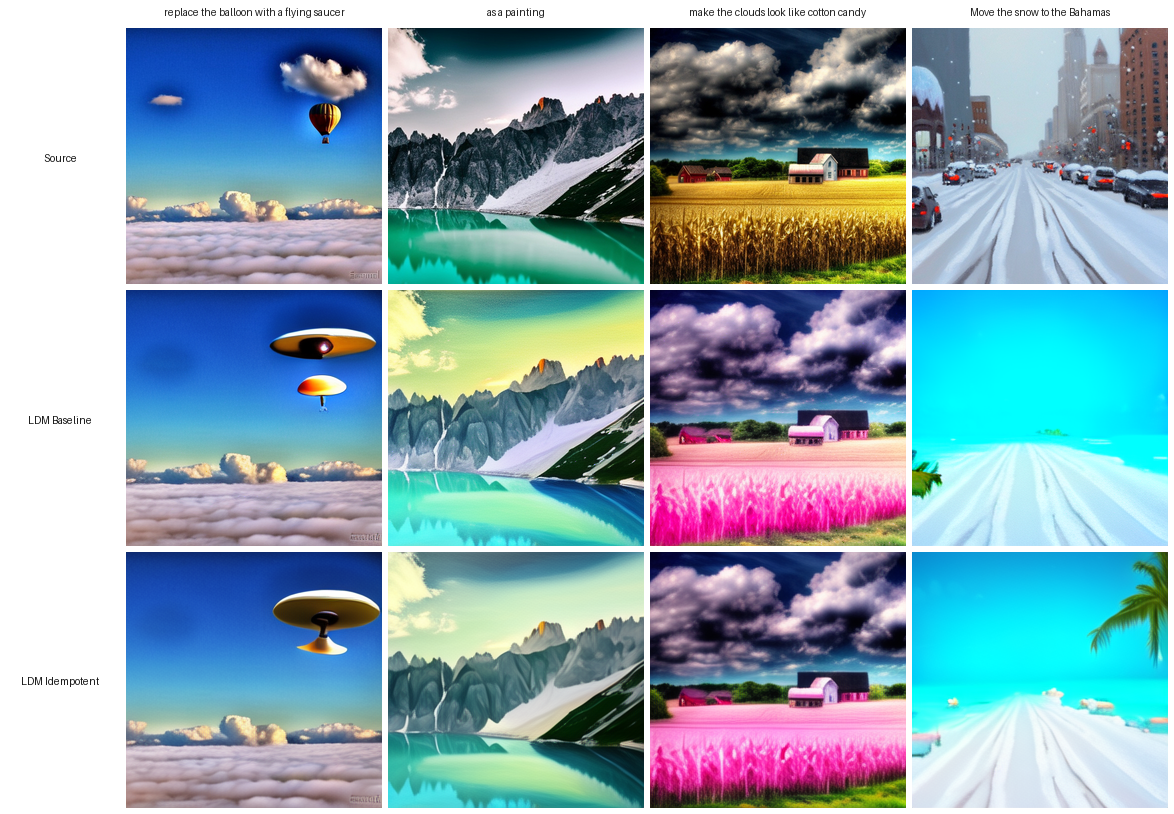}
    \caption{Instruction-guided editing using a latent diffusion
    model~\cite{brooks2023instructpix2pix} trained on idempotent (bottom) versus
    standard VAE representations (middle). The idempotent representation better
    preserves fine-grained details including object positions, colors, and scene
    structure.}
    \label{fig:diff_edit}
\end{figure*}

\section{Proofs}
\label{sec:proof}

We provide formal proofs for the theoretical claims made in Section \ref{sec:method}.

\subsection*{Idempotency is Necessary for Projection Optimality}

\begin{definition}[Projection Operator]
A map $P : \mathcal{X} \to \mathcal{X}$ is a projection onto a set
$\mathcal{M} \subseteq \mathcal{X}$ if $P(x) \in \mathcal{M}$ for all $x$,
and $P(p) = p$ for all $p \in \mathcal{M}$.
\end{definition}

\begin{proposition}
\label{prop:necessary}
Let $f = D_\theta \circ E_\phi : \mathcal{X} \to \mathcal{X}$ be an
encoder--decoder composition that implements a projection onto a learned
manifold $\mathcal{M}$, i.e., $f(x) = \mathrm{proj}_{\mathcal{M}}(x)$.
Then $f$ must be idempotent: $f(f(x)) = f(x)$ for all $x \in \mathcal{X}$.
\end{proposition}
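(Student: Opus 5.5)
The argument is a direct unfolding of the definition of a projection operator given just above; no analytic machinery is needed. I would interpret the hypothesis $f(x) = \mathrm{proj}_{\mathcal{M}}(x)$ as saying that $f$ is a projection onto $\mathcal{M}$ in the sense of the Definition. Concretely, $f$ satisfies two properties. First, the range property: $f(x) \in \mathcal{M}$ for every $x \in \mathcal{X}$. Second, the fixing property: $f(p) = p$ for every $p \in \mathcal{M}$.

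The proof then has two steps. Fix an arbitrary $x \in \mathcal{X}$ and set $p := f(x) = D_\theta(E_\phi(x))$. By the range property, $p \in \mathcal{M}$. Applying the fixing property to $p$ gives $f(p) = p$, that is, $f(f(x)) = f(x)$. Since $x$ was arbitrary, this yields $f \circ f = f$, which is exactly the output-space identity \eqref{eq:output_idem}. I would also remark that the factorization $f = D_\theta \circ E_\phi$ plays no role beyond naming $f$; the statement holds for any map that is a projection in this sense.

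The only real obstacle is interpretive rather than technical. If $\mathrm{proj}_{\mathcal{M}}$ is instead read as a metric nearest-point map, $\mathrm{proj}_{\mathcal{M}}(x) \in \arg\min_{m \in \mathcal{M}} \|x - m\|$, then both properties of the Definition must be verified before the two-step argument applies. The range property is immediate, since the minimizer is selected from $\mathcal{M}$. For the fixing property, take $p \in \mathcal{M}$: then $\|p - p\| = 0$, and $0$ is the minimum of a norm, so $p$ is a minimizer. Moreover, any minimizer $m$ must satisfy $\|p - m\| = 0$ and hence $m = p$, so the minimizer is unique and equals $p$. This gives $\mathrm{proj}_{\mathcal{M}}(p) = p$ regardless of how ties are broken elsewhere, and the argument above then goes through unchanged. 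I would include this short verification as a remark, so that the proposition covers both readings of ``projection.''
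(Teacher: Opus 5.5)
Your proof is correct and is the same argument as the paper's: set $p = f(x)$, use the range property to get $p \in \mathcal{M}$, then apply the fixing property to conclude $f(f(x)) = f(x)$. Your extra remark, checking that a nearest-point map (whenever it is defined for every $x$) satisfies both properties of the Definition, is a sound addition that the paper does not include.
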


\begin{proof}
Since $f$ is a projection onto $\mathcal{M}$, we have $f(x) \in \mathcal{M}$
for all $x \in \mathcal{X}$.
By definition of a projection, any point already on $\mathcal{M}$ is fixed:
for all $p \in \mathcal{M}$, $f(p) = p$.
Setting $p = f(x) \in \mathcal{M}$ gives
\[
    f(f(x)) = f(x). \qedhere
\]
\end{proof}

\begin{remark}
Contrapositive: if $f(f(x)) \neq f(x)$ for some $x$, then $f$ does not
implement a valid projection onto any manifold, and the learned representation
is suboptimal in the projection sense.
\end{remark}

\subsection*{Latent Consistency is Sufficient for Output Idempotency}

\begin{proposition}
\label{prop:sufficient}
Let $f = D_\theta \circ E_\phi$.
If the encoder satisfies the latent consistency condition
\begin{equation}
    E_\phi\!\left(D_\theta(E_\phi(x))\right) = E_\phi(x)
    \quad \forall\, x \in \mathcal{X},
    \label{eq:latent_consistency}
\end{equation}
then $f$ is idempotent:
$D_\theta(E_\phi(D_\theta(E_\phi(x)))) = D_\theta(E_\phi(x))$
for all $x \in \mathcal{X}$.
\end{proposition}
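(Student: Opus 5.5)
The argument is a direct substitution: apply the decoder to both sides of the latent consistency condition. Fix an arbitrary $x \in \mathcal{X}$ and set $z = E_\phi(x)$. The hypothesis in Equation~\eqref{eq:latent_consistency} then reads $E_\phi(D_\theta(z)) = z$. In words, the latent code $z$ is a fixed point of the latent-space map $E_\phi \circ D_\theta$.

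Next, apply $D_\theta$ to both sides of this equality. Because $D_\theta$ is a function, equal inputs give equal outputs, so
\[
D_\theta\bigl(E_\phi(D_\theta(z))\bigr) = D_\theta(z).
\]
Substituting $z = E_\phi(x)$ back in gives
\[
D_\theta\bigl(E_\phi(D_\theta(E_\phi(x)))\bigr) = D_\theta(E_\phi(x)),
\]
which is exactly $f(f(x)) = f(x)$. Since $x$ was arbitrary, $f$ is idempotent.

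No step here is a genuine obstacle. The one point to state carefully is that the condition only needs to hold on the image $E_\phi(\mathcal{X})$, and that is precisely how the hypothesis is phrased. It is also worth remarking that the converse can fail: if $D_\theta$ is not injective, then $f$ can be idempotent even though $E_\phi \circ D_\theta$ moves latent codes. This is why the latent condition is sufficient but not necessary.

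Finally, I would add the quantitative version that mirrors the Lipschitz bound stated earlier in Section~\ref{sec:method}. If $D_\theta$ is $L$-Lipschitz, the same substitution gives $\|f(f(x)) - f(x)\| \le L\,\|E_\phi(D_\theta(E_\phi(x))) - E_\phi(x)\|$. The exact statement of the proposition is the special case in which the right-hand side is zero.
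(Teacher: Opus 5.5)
Your proposal is correct and follows essentially the same route as the paper's proof: set $z = E_\phi(x)$, read the hypothesis as $E_\phi(D_\theta(z)) = z$, apply $D_\theta$ to both sides, and substitute back. Your added remarks, that the converse fails when $D_\theta$ is not injective and the Lipschitz version, are accurate; the latter is the paper's separate Proposition~\ref{prop:lipschitz}.
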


\begin{proof}
Let $z = E_\phi(x)$ and $\hat{x} = D_\theta(z)$.
By assumption~\eqref{eq:latent_consistency},
\[
    E_\phi(\hat{x}) = E_\phi(D_\theta(z)) = z.
\]
Applying $D_\theta$ to both sides:
\[
    D_\theta(E_\phi(\hat{x})) = D_\theta(z) = \hat{x}.
\]
Substituting back $\hat{x} = D_\theta(E_\phi(x))$:
\[
    D_\theta\!\left(E_\phi\!\left(D_\theta(E_\phi(x))\right)\right)
    = D_\theta(E_\phi(x)). \qedhere
\]
\end{proof}

\subsection*{Latent Error Bounds Output-Space Drift}

\begin{proposition}
\label{prop:lipschitz}
Suppose $D_\theta$ is $L$-Lipschitz. Then the output-space idempotency
error is bounded by the latent-space idempotency error:
\[
    \left\|D_\theta(E_\phi(D_\theta(E_\phi(x)))) - D_\theta(E_\phi(x))\right\|
    \;\leq\;
    L\cdot\left\|E_\phi(D_\theta(E_\phi(x))) - E_\phi(x)\right\|.
\]
\end{proposition}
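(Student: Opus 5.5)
The plan is to apply the Lipschitz property of $D_\theta$ once, to the two latent codes whose decodings appear on the left-hand side. Fix $x \in \mathcal{X}$ and introduce the shorthand
\[
z = E_\phi(x), \qquad \hat{x} = D_\theta(z), \qquad \hat{z} = E_\phi(\hat{x}) = E_\phi\bigl(D_\theta(E_\phi(x))\bigr).
\]
These are the same quantities used in the idempotency loss and in the proof of Proposition~\ref{prop:sufficient}. In this notation the left-hand side of the claimed inequality is $\|D_\theta(\hat{z}) - D_\theta(z)\|$, and the right-hand side is $L\,\|\hat{z} - z\|$.

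First, I will make the Lipschitz assumption explicit. Saying $D_\theta$ is $L$-Lipschitz means that $\|D_\theta(z_1) - D_\theta(z_2)\| \le L\,\|z_1 - z_2\|$ for all $z_1, z_2 \in \mathcal{Z}$. Here the norm on the left is the one on $\mathcal{X}$ and the norm on the right is the one on $\mathcal{Z}$; these are the norms appearing in the statement.

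Second, I will instantiate this with $z_1 = \hat{z}$ and $z_2 = z$. Both points lie in $\mathcal{Z}$, since they are outputs of $E_\phi$. This gives $\|D_\theta(\hat{z}) - D_\theta(z)\| \le L\,\|\hat{z} - z\|$.

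Third, I will substitute back the definitions of $z$ and $\hat{z}$. This yields exactly the stated bound, pointwise for every $x$. Two consequences follow immediately and could be remarked on. Taking expectations over $x \sim p_{\mathcal{X}}$ (and squaring, if one wants to match $\mathcal{L}_{\text{idem}}$) shows that the expected output drift is controlled by $L^2\,\mathcal{L}_{\text{idem}}$. Setting $\hat{z} = z$ recovers Proposition~\ref{prop:sufficient} as the special case in which the right-hand side is zero.

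There is no genuine obstacle here; the argument is a single application of the Lipschitz definition. The only point needing care is bookkeeping. The Lipschitz constant must be taken with respect to the same latent norm used to measure $\|\hat{z} - z\|$. In addition, for discrete-codebook models such as VQ-VAE, $D_\theta$ should be understood as acting on the continuous pre-quantization embedding, as the paper stipulates for the alignment objective. Otherwise quantization could make $D_\theta$ non-Lipschitz in the relevant space.
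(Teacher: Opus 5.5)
Your proof is correct and is exactly the paper's argument: with $z = E_\phi(x)$ and $\hat{z} = E_\phi(D_\theta(z))$, one application of the $L$-Lipschitz property of $D_\theta$ to the pair $(\hat{z}, z)$ gives the bound. Your side remark is also sound, and it is stated more carefully than the paper's Corollary: since $\mathcal{L}_{\text{idem}}$ is a squared loss, the matching expected bound is $L^2\,\mathcal{L}_{\text{idem}}$, provided the Lipschitz constant is taken in the Euclidean latent norm.
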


\begin{proof}
Let $z = E_\phi(x)$ and $\hat{z} = E_\phi(D_\theta(z))$.
Since $D_\theta$ is $L$-Lipschitz,
\[
    \left\|D_\theta(\hat{z}) - D_\theta(z)\right\|
    \leq L\,\|\hat{z} - z\|
    = L\,\left\|E_\phi(D_\theta(E_\phi(x))) - E_\phi(x)\right\|. \qedhere
\]
\end{proof}

\begin{corollary}
\label{cor:surrogate}
Under the assumptions of Proposition~\ref{prop:lipschitz}, any reduction
in $\mathcal{L}_{\mathrm{idem}} = \mathbb{E}_x\bigl[\|E_\phi(D_\theta(
\mathrm{sg}(z))) - \mathrm{sg}(z)\|^2\bigr]$ produces a proportional
reduction in output-space drift, with proportionality constant $L$.
This justifies minimizing the latent-space loss as a practical surrogate
for enforcing output-space idempotency.
\end{corollary}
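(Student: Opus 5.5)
The corollary follows from Proposition~\ref{prop:lipschitz} once the pointwise bound is carried through the expectation that defines $\mathcal{L}_{\mathrm{idem}}$.

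\textbf{Step 1 (remove the stop-gradient).} The operator $\operatorname{sg}(\cdot)$ only changes gradients, not values. So in value,
\[
\mathcal{L}_{\mathrm{idem}} = \mathbb{E}_x\bigl[\|\hat z - z\|^2\bigr], \qquad z = E_\phi(x),\quad \hat z = E_\phi(D_\theta(z)).
\]
This is exactly the latent idempotency error appearing on the right-hand side of Proposition~\ref{prop:lipschitz}, squared and averaged over $x$.

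\textbf{Step 2 (square and average the pointwise bound).} Write $\Delta_{\mathrm{out}}(x) = \|D_\theta(E_\phi(D_\theta(E_\phi(x)))) - D_\theta(E_\phi(x))\|$ for the output-space drift. Squaring the inequality of Proposition~\ref{prop:lipschitz} gives
\[
\Delta_{\mathrm{out}}(x)^2 \le L^2 \|\hat z - z\|^2 \quad \text{for every } x.
\]
Taking expectations under $p_{\mathcal{X}}$ yields
\[
\mathbb{E}_x\bigl[\Delta_{\mathrm{out}}(x)^2\bigr] \le L^2\, \mathcal{L}_{\mathrm{idem}}.
\]
Jensen's inequality then gives the unsquared form
\[
\mathbb{E}_x\bigl[\Delta_{\mathrm{out}}(x)\bigr] \le L\sqrt{\mathcal{L}_{\mathrm{idem}}}.
\]

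\textbf{Step 3 (how to read ``proportional reduction'').} This is the main obstacle: the reduction is proportional in the sense of an upper bound, not an equality.
\begin{itemize}
\item Pointwise, shrinking the latent error $\|\hat z - z\|$ by some amount shrinks the bound on $\Delta_{\mathrm{out}}(x)$ by $L$ times that amount. This is a linear relation with constant $L$, matching the corollary's wording.
\item In aggregate, the relation is linear between the root-mean-square quantities: $\sqrt{\mathbb{E}_x[\Delta_{\mathrm{out}}(x)^2]} \le L\sqrt{\mathcal{L}_{\mathrm{idem}}}$.
\end{itemize}
The actual drift need not fall by a fixed fraction whenever $\mathcal{L}_{\mathrm{idem}}$ falls; what falls is its guaranteed upper bound. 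I would state the corollary precisely in this form.

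\textbf{Step 4 (conclusion).} Because the output drift is controlled by the latent loss through the constant $L$, minimizing $\mathcal{L}_{\mathrm{idem}}$ is a legitimate surrogate for output-space idempotency. In the limit $\mathcal{L}_{\mathrm{idem}} = 0$, latent consistency holds almost everywhere. Proposition~\ref{prop:sufficient} then gives exact idempotency of $f$ almost everywhere.
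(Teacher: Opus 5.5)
Your proposal is correct and takes the paper's route: the paper gives no separate proof of this corollary and reads it off the pointwise Lipschitz bound of Proposition~\ref{prop:lipschitz}, exactly as you do. Your extra steps make precise what the paper's informal ``proportional reduction'' wording glosses over: squaring and averaging to get $\mathbb{E}_x[\Delta_{\mathrm{out}}(x)^2] \le L^2\,\mathcal{L}_{\mathrm{idem}}$, using Jensen for the unsquared form, and noting that only the upper bound is guaranteed to shrink, with $L$ relating the RMS drift to $\sqrt{\mathcal{L}_{\mathrm{idem}}}$ (equivalently, $L^2$ relating mean squared drift to $\mathcal{L}_{\mathrm{idem}}$ itself).
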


\section{Implementation Details}
\label{sec:details}

\subsubsection{MNIST Idempotency Experiments}
\label{sec:mnist_experiments}

We evaluate idempotency properties of VAEs on MNIST~\cite{lecun1998mnist} under
different training objectives and compare their effect on reconstruction,
editing, and iterative stability.

MNIST consists of $60{,}000$ training and $10{,}000$ test grayscale
$28\times28$ images normalized to $[0,1]$ and flattened to 784 dimensions.

All methods use the same MLP-based VAE with a 32-dimensional latent space.

\begin{table}[h]
\centering
\caption{VAE architecture (shared across all variants).}
\label{tab:mnist_vae}
\small
\begin{tabular}{ll}
\toprule
Component & Architecture \\
\midrule
Encoder & 784 $\rightarrow$ 512 $\rightarrow$ 512 (LeakyReLU, Dropout 0.1) \\
Latent & $\mu, \log\sigma^2 \in \mathbb{R}^{32}$ \\
Decoder & 32 $\rightarrow$ 512 $\rightarrow$ 512 $\rightarrow$ 784 (Sigmoid) \\
\bottomrule
\end{tabular}
\end{table}

\begin{table}[h]
\centering
\caption{Training objectives compared on MNIST.}
\label{tab:mnist_variants}
\small
\begin{tabular}{p{3cm}p{10cm}}
\toprule
Method & Objective \\
\midrule
VAE (baseline) &
Standard ELBO with $\beta=1$. \\

Latent idempotency &
ELBO + latent consistency:
$\|\mu_\phi(D_\theta(\mu)) - \mu\|^2$. \\

Ours &
Latent consistency + synthetic latent reconstruction from posterior statistics. \\

IGN baseline &
Image-space idempotency and tightness losses on
$F(x)=D_\theta(\mu_\phi(x))$. \\
\bottomrule
\end{tabular}
\end{table}

\begin{table}[h]
\centering
\caption{Evaluation metrics.}
\label{tab:mnist_metrics}
\small
\begin{tabular}{ll}
\toprule
Metric & Description \\
\midrule
CAS \cite{ravuri2019classification} & Classification accuracy on generated samples \\
Reconstruction & MSE / LPIPS on test images \\
Idempotency & MSE under repeated application $F^{\circ n}(x)$ \\
FID & Distribution distance between original and iterated outputs \\
\bottomrule
\end{tabular}
\end{table}

LPIPS uses AlexNet features. MNIST images are replicated to 3 channels
and resized to $64\times64$ for compatibility. Figure \ref{fig:mnist_sweep} shows the results for our model using different weight values for the idempotency loss.

\subsection{Synthetic Benchmarks}
\label{sec:synthetic}

We evaluate on two synthetic datasets: Colored MNIST and dSprites.
Both experiments follow the same pipeline:
(i) train a VAE,
(ii) extract encoder means,
(iii) train a conditional FFJORD flow on the frozen latent space,
and (iv) evaluate generation, editability, reversibility, and
idempotency.

\paragraph{Colored MNIST.}
We construct a two-factor variant of MNIST~\cite{lecun1998mnist} in which each
image is independently assigned a digit class ($10$ categories) and a
color ($10$ colors), yielding $100$ distinct (digit, color)
combinations.
Images are $3\times28\times28$ RGB tensors.
During training, colors are resampled on every iteration to avoid
memorising fixed (image, color) pairs.
The dataset contains $60{,}000$ training and $10{,}000$ test images.

\paragraph{dSprites.}
dSprites~\cite{dsprites17} contains $737{,}280$ binary
$1\times64\times64$ images generated from five independent factors:
shape, scale, orientation, posX, and posY.
We use a fixed $90/10$ train-test split and discard the trivial color
factor.

\begin{table*}[!http]
\centering
\caption{Model and training configuration for synthetic benchmarks.}
\label{tab:synthetic_config}
\small
\setlength{\tabcolsep}{6pt}
\begin{tabular}{lcc}
\toprule
 & \textbf{Colored MNIST} & \textbf{dSprites} \\
\midrule
Image size & $3\times28\times28$ & $1\times64\times64$ \\
VAE architecture & ConvVAE & ResVAE64 \\
Latent dimension & $32$ & $64$ \\
VAE optimiser & Adam & Adam \\
VAE learning rate & $10^{-3}$ & $10^{-3}$ \\
VAE batch size & $2{,}048$ & $1{,}024$ \\
VAE epochs & $200$ & $20$ \\
\midrule
Flow model & FFJORD & FFJORD \\
FFJORD blocks & $4$ & $4$ \\
MLP hidden dimension & $256$ & $256$ \\
Activation & Tanh & Tanh \\
Flow optimiser & Adam & Adam \\
Flow learning rate & $5\times10^{-4}$ & $5\times10^{-4}$ \\
Flow batch size & $8{,}192$ & $4{,}096$ \\
Flow epochs & $30$ & $30$ \\
\midrule
Condition embeddings &
digit $(10\times16)$ + color $(10\times16)$ &
$5\times16$ embeddings \\
Condition vocabularies &
digit: $10$, color: $10$ &
$[3,6,40,32,32]$ \\
\bottomrule
\end{tabular}
\end{table*}

\paragraph{Evaluation Protocol}
\label{sec:evaluation}

We evaluate generation, editing, and representation stability using the following metrics.

\begin{table}[h]
\centering
\caption{Evaluation metrics used across all experiments.}
\label{tab:eval_metrics}
\small
\setlength{\tabcolsep}{6pt}
\begin{tabular}{ll}
\toprule
Metric & Description \\
\midrule
CAS \cite{ravuri2019classification} & Classification accuracy on generated samples \\
ESR & Classification accuracy on edited samples \\

Idempotency & MSE under repeated encoder--decoder application \\
\\
\bottomrule
\end{tabular}
\end{table}

\paragraph{Reproducibility}

All experiments use fixed random seeds for dataset splits and evaluation.
PyTorch and NumPy seeds are jointly initialised at the start of each run.
Pre-extracted latent files are reused across all CNF training runs to
ensure identical train/test splits and latent inputs across experiments.


\newpage
\section*{NeurIPS Paper Checklist}

The checklist is designed to encourage best practices for responsible machine learning research, addressing issues of reproducibility, transparency, research ethics, and societal impact. Do not remove the checklist: {\bf The papers not including the checklist will be desk rejected.} The checklist should follow the references and follow the (optional) supplemental material.  The checklist does NOT count towards the page
limit. 

Please read the checklist guidelines carefully for information on how to answer these questions. For each question in the checklist:
\begin{itemize}
    \item You should answer \answerYes{}, \answerNo{}, or \answerNA{}.
    \item \answerNA{} means either that the question is Not Applicable for that particular paper or the relevant information is Not Available.
    \item Please provide a short (1--2 sentence) justification right after your answer (even for \answerNA). 
\end{itemize}

{\bf The checklist answers are an integral part of your paper submission.} They are visible to the reviewers, area chairs, senior area chairs, and ethics reviewers. You will also be asked to include it (after eventual revisions) with the final version of your paper, and its final version will be published with the paper.

The reviewers of your paper will be asked to use the checklist as one of the factors in their evaluation. While \answerYes{} is generally preferable to \answerNo{}, it is perfectly acceptable to answer \answerNo{} provided a proper justification is given (e.g., error bars are not reported because it would be too computationally expensive'' or ``we were unable to find the license for the dataset we used''). In general, answering \answerNo{} or \answerNA{} is not grounds for rejection. While the questions are phrased in a binary way, we acknowledge that the true answer is often more nuanced, so please just use your best judgment and write a justification to elaborate. All supporting evidence can appear either in the main paper or the supplemental material, provided in appendix. If you answer \answerYes{} to a question, in the justification please point to the section(s) where related material for the question can be found.

IMPORTANT, please:
\begin{itemize}
    \item {\bf Delete this instruction block, but keep the section heading ``NeurIPS Paper Checklist"},
    \item  {\bf Keep the checklist subsection headings, questions/answers and guidelines below.}
    \item {\bf Do not modify the questions and only use the provided macros for your answers}.
\end{itemize}


\begin{enumerate}

\item {\bf Claims}
    \item[] Question: Do the main claims made in the abstract and introduction accurately reflect the paper's contributions and scope?
    \item[] Answer: \answerYes{} 
    \item[] Justification: The abstract and introduction clearly state the three main
    claims: (1) idempotency is a necessary condition for projection optimality in
    encoder-decoder models, (2) our regularization reduces drift under repeated
    application, and (3) enforcing encoder-decoder alignment improves editing quality.
    All three are supported by theoretical proofs in Appendix \ref{sec:proof} and empirical results
    in Section \ref{sec:results}.
    
    \item[] Guidelines:
    \begin{itemize}
        \item The answer \answerNA{} means that the abstract and introduction do not include the claims made in the paper.
        \item The abstract and/or introduction should clearly state the claims made, including the contributions made in the paper and important assumptions and limitations. A \answerNo{} or \answerNA{} answer to this question will not be perceived well by the reviewers. 
        \item The claims made should match theoretical and experimental results, and reflect how much the results can be expected to generalize to other settings. 
        \item It is fine to include aspirational goals as motivation as long as it is clear that these goals are not attained by the paper. 
    \end{itemize}

\item {\bf Limitations}
    \item[] Question: Does the paper discuss the limitations of the work performed by the authors?
    \item[] Answer: \answerYes{} 
    \item[] Justification: The limitations are in section \ref{sec:limit}
    \item[] Guidelines:
    \begin{itemize}
        \item The answer \answerNA{} means that the paper has no limitation while the answer \answerNo{} means that the paper has limitations, but those are not discussed in the paper. 
        \item The authors are encouraged to create a separate ``Limitations'' section in their paper.
        \item The paper should point out any strong assumptions and how robust the results are to violations of these assumptions (e.g., independence assumptions, noiseless settings, model well-specification, asymptotic approximations only holding locally). The authors should reflect on how these assumptions might be violated in practice and what the implications would be.
        \item The authors should reflect on the scope of the claims made, e.g., if the approach was only tested on a few datasets or with a few runs. In general, empirical results often depend on implicit assumptions, which should be articulated.
        \item The authors should reflect on the factors that influence the performance of the approach. For example, a facial recognition algorithm may perform poorly when image resolution is low or images are taken in low lighting. Or a speech-to-text system might not be used reliably to provide closed captions for online lectures because it fails to handle technical jargon.
        \item The authors should discuss the computational efficiency of the proposed algorithms and how they scale with dataset size.
        \item If applicable, the authors should discuss possible limitations of their approach to address problems of privacy and fairness.
        \item While the authors might fear that complete honesty about limitations might be used by reviewers as grounds for rejection, a worse outcome might be that reviewers discover limitations that aren't acknowledged in the paper. The authors should use their best judgment and recognize that individual actions in favor of transparency play an important role in developing norms that preserve the integrity of the community. Reviewers will be specifically instructed to not penalize honesty concerning limitations.
    \end{itemize}

\item {\bf Theory assumptions and proofs}
    \item[] Question: For each theoretical result, does the paper provide the full set of assumptions and a complete (and correct) proof?
    \item[] Answer: \answerYes{} 
    \item[] Justification: All assumptions are stated explicitly in the proposition
    statements. Formal proofs for all theoretical claims are provided in Appendix \ref{sec:proof},
    including the necessity of idempotency for projection optimality
    (Proposition~\ref{prop:necessary}), latent consistency as a sufficient condition
    (Proposition~\ref{prop:sufficient}), and the Lipschitz error propagation bound
    (Proposition~\ref{prop:lipschitz}). Informal arguments in the main text are
    complemented by formal proofs in the appendix.

    \item[] Guidelines:
    \begin{itemize}
        \item The answer \answerNA{} means that the paper does not include theoretical results. 
        \item All the theorems, formulas, and proofs in the paper should be numbered and cross-referenced.
        \item All assumptions should be clearly stated or referenced in the statement of any theorems.
        \item The proofs can either appear in the main paper or the supplemental material, but if they appear in the supplemental material, the authors are encouraged to provide a short proof sketch to provide intuition. 
        \item Inversely, any informal proof provided in the core of the paper should be complemented by formal proofs provided in appendix or supplemental material.
        \item Theorems and Lemmas that the proof relies upon should be properly referenced. 
    \end{itemize}

    \item {\bf Experimental result reproducibility}
    \item[] Question: Does the paper fully disclose all the information needed to reproduce the main experimental results of the paper to the extent that it affects the main claims and/or conclusions of the paper (regardless of whether the code and data are provided or not)?
    \item[] Answer:  \answerYes{} 
    \item[] Justification: Full implementation details are provided in Appendix \ref{sec:details}
    \item[] Guidelines:
    \begin{itemize}
        \item The answer \answerNA{} means that the paper does not include experiments.
        \item If the paper includes experiments, a \answerNo{} answer to this question will not be perceived well by the reviewers: Making the paper reproducible is important, regardless of whether the code and data are provided or not.
        \item If the contribution is a dataset and\slash or model, the authors should describe the steps taken to make their results reproducible or verifiable. 
        \item Depending on the contribution, reproducibility can be accomplished in various ways. For example, if the contribution is a novel architecture, describing the architecture fully might suffice, or if the contribution is a specific model and empirical evaluation, it may be necessary to either make it possible for others to replicate the model with the same dataset, or provide access to the model. In general. releasing code and data is often one good way to accomplish this, but reproducibility can also be provided via detailed instructions for how to replicate the results, access to a hosted model (e.g., in the case of a large language model), releasing of a model checkpoint, or other means that are appropriate to the research performed.
        \item While NeurIPS does not require releasing code, the conference does require all submissions to provide some reasonable avenue for reproducibility, which may depend on the nature of the contribution. For example
        \begin{enumerate}
            \item If the contribution is primarily a new algorithm, the paper should make it clear how to reproduce that algorithm.
            \item If the contribution is primarily a new model architecture, the paper should describe the architecture clearly and fully.
            \item If the contribution is a new model (e.g., a large language model), then there should either be a way to access this model for reproducing the results or a way to reproduce the model (e.g., with an open-source dataset or instructions for how to construct the dataset).
            \item We recognize that reproducibility may be tricky in some cases, in which case authors are welcome to describe the particular way they provide for reproducibility. In the case of closed-source models, it may be that access to the model is limited in some way (e.g., to registered users), but it should be possible for other researchers to have some path to reproducing or verifying the results.
        \end{enumerate}
    \end{itemize}

\item {\bf Open access to data and code}
    \item[] Question: Does the paper provide open access to the data and code, with sufficient instructions to faithfully reproduce the main experimental results, as described in supplemental material?
    \item[] Answer: \answerNo{} 
    \item[] Justification: Code is not released at submission time to preserve
    anonymity. All datasets used are publicly available. We will release the full
    code upon acceptance.
    \item[] Guidelines:
    \begin{itemize}
        \item The answer \answerNA{} means that paper does not include experiments requiring code.
        \item Please see the NeurIPS code and data submission guidelines (\url{https://neurips.cc/public/guides/CodeSubmissionPolicy}) for more details.
        \item While we encourage the release of code and data, we understand that this might not be possible, so \answerNo{} is an acceptable answer. Papers cannot be rejected simply for not including code, unless this is central to the contribution (e.g., for a new open-source benchmark).
        \item The instructions should contain the exact command and environment needed to run to reproduce the results. See the NeurIPS code and data submission guidelines (\url{https://neurips.cc/public/guides/CodeSubmissionPolicy}) for more details.
        \item The authors should provide instructions on data access and preparation, including how to access the raw data, preprocessed data, intermediate data, and generated data, etc.
        \item The authors should provide scripts to reproduce all experimental results for the new proposed method and baselines. If only a subset of experiments are reproducible, they should state which ones are omitted from the script and why.
        \item At submission time, to preserve anonymity, the authors should release anonymized versions (if applicable).
        \item Providing as much information as possible in supplemental material (appended to the paper) is recommended, but including URLs to data and code is permitted.
    \end{itemize}

\item {\bf Experimental setting/details}
    \item[] Question: Does the paper specify all the training and test details (e.g., data splits, hyperparameters, how they were chosen, type of optimizer) necessary to understand the results?
    \item[] Answer: \answerYes{} 
    \item[] Justification: Training and evaluation details are reported in section \ref{sec:experiments} and  Appendix \ref{sec:details}
    \item[] Guidelines:
    \begin{itemize}
        \item The answer \answerNA{} means that the paper does not include experiments.
        \item The experimental setting should be presented in the core of the paper to a level of detail that is necessary to appreciate the results and make sense of them.
        \item The full details can be provided either with the code, in appendix, or as supplemental material.
    \end{itemize}

\item {\bf Experiment statistical significance}
    \item[] Question: Does the paper report error bars suitably and correctly defined or other appropriate information about the statistical significance of the experiments?
    \item[] Answer: \answerYes{} 
    \item[] Justification: Yes in the section \ref{sec:results}.
    \item[] Guidelines:
    \begin{itemize}
        \item The answer \answerNA{} means that the paper does not include experiments.
        \item The authors should answer \answerYes{} if the results are accompanied by error bars, confidence intervals, or statistical significance tests, at least for the experiments that support the main claims of the paper.
        \item The factors of variability that the error bars are capturing should be clearly stated (for example, train/test split, initialization, random drawing of some parameter, or overall run with given experimental conditions).
        \item The method for calculating the error bars should be explained (closed form formula, call to a library function, bootstrap, etc.)
        \item The assumptions made should be given (e.g., Normally distributed errors).
        \item It should be clear whether the error bar is the standard deviation or the standard error of the mean.
        \item It is OK to report 1-sigma error bars, but one should state it. The authors should preferably report a 2-sigma error bar than state that they have a 96\% CI, if the hypothesis of Normality of errors is not verified.
        \item For asymmetric distributions, the authors should be careful not to show in tables or figures symmetric error bars that would yield results that are out of range (e.g., negative error rates).
        \item If error bars are reported in tables or plots, the authors should explain in the text how they were calculated and reference the corresponding figures or tables in the text.
    \end{itemize}

\item {\bf Experiments compute resources}
    \item[] Question: For each experiment, does the paper provide sufficient information on the computer resources (type of compute workers, memory, time of execution) needed to reproduce the experiments?
    \item[] Answer: \answerNo{} 
    \item[] Justification: The paper does not currently report compute resources. Experiments were run on NVIDIA 4 A100 and 8 V100 GPUs. 
    
    \item[] Guidelines:
    \begin{itemize}
        \item The answer \answerNA{} means that the paper does not include experiments.
        \item The paper should indicate the type of compute workers CPU or GPU, internal cluster, or cloud provider, including relevant memory and storage.
        \item The paper should provide the amount of compute required for each of the individual experimental runs as well as estimate the total compute. 
        \item The paper should disclose whether the full research project required more compute than the experiments reported in the paper (e.g., preliminary or failed experiments that didn't make it into the paper). 
    \end{itemize}
    
\item {\bf Code of ethics}
    \item[] Question: Does the research conducted in the paper conform, in every respect, with the NeurIPS Code of Ethics \url{https://neurips.cc/public/EthicsGuidelines}?
    \item[] Answer: \answerYes{} 
    \item[] Justification: The work studies foundational properties of generative
    models using standard publicly available datasets. No human subjects, sensitive
    data collection, or harmful applications are involved.
    \item[] Guidelines:
    \begin{itemize}
        \item The answer \answerNA{} means that the authors have not reviewed the NeurIPS Code of Ethics.
        \item If the authors answer \answerNo, they should explain the special circumstances that require a deviation from the Code of Ethics.
        \item The authors should make sure to preserve anonymity (e.g., if there is a special consideration due to laws or regulations in their jurisdiction).
    \end{itemize}

\item {\bf Broader impacts}
    \item[] Question: Does the paper discuss both potential positive societal impacts and negative societal impacts of the work performed?
    \item[] Answer: \answerYes{} 
    \item[] Justification: This work improves the stability and controllability of
    generative editing models, which can benefit applications in creative tools,
    medical imaging, and speech processing. A potential negative impact is that
    more stable and realistic generative models could lower the barrier to creating
    deepfakes or synthetic media for disinformation. We do not release production ready
    models, which mitigates immediate misuse risk.
    \item[] Guidelines:
    \begin{itemize}
        \item The answer \answerNA{} means that there is no societal impact of the work performed.
        \item If the authors answer \answerNA{} or \answerNo, they should explain why their work has no societal impact or why the paper does not address societal impact.
        \item Examples of negative societal impacts include potential malicious or unintended uses (e.g., disinformation, generating fake profiles, surveillance), fairness considerations (e.g., deployment of technologies that could make decisions that unfairly impact specific groups), privacy considerations, and security considerations.
        \item The conference expects that many papers will be foundational research and not tied to particular applications, let alone deployments. However, if there is a direct path to any negative applications, the authors should point it out. For example, it is legitimate to point out that an improvement in the quality of generative models could be used to generate Deepfakes for disinformation. On the other hand, it is not needed to point out that a generic algorithm for optimizing neural networks could enable people to train models that generate Deepfakes faster.
        \item The authors should consider possible harms that could arise when the technology is being used as intended and functioning correctly, harms that could arise when the technology is being used as intended but gives incorrect results, and harms following from (intentional or unintentional) misuse of the technology.
        \item If there are negative societal impacts, the authors could also discuss possible mitigation strategies (e.g., gated release of models, providing defenses in addition to attacks, mechanisms for monitoring misuse, mechanisms to monitor how a system learns from feedback over time, improving the efficiency and accessibility of ML).
    \end{itemize}
    
\item {\bf Safeguards}
    \item[] Question: Does the paper describe safeguards that have been put in place for responsible release of data or models that have a high risk for misuse (e.g., pre-trained language models, image generators, or scraped datasets)?
    \item[] Answer: \answerNA{} 
    \item[] Justification: The paper does not release pretrained models or scraped
    datasets. All experiments use standard publicly available benchmarks. The
    proposed method is a training regularizer and does not itself introduce new
    risks beyond those of existing generative models.
    \item[] Guidelines:
    \begin{itemize}
        \item The answer \answerNA{} means that the paper poses no such risks.
        \item Released models that have a high risk for misuse or dual-use should be released with necessary safeguards to allow for controlled use of the model, for example by requiring that users adhere to usage guidelines or restrictions to access the model or implementing safety filters. 
        \item Datasets that have been scraped from the Internet could pose safety risks. The authors should describe how they avoided releasing unsafe images.
        \item We recognize that providing effective safeguards is challenging, and many papers do not require this, but we encourage authors to take this into account and make a best faith effort.
    \end{itemize}

\item {\bf Licenses for existing assets}
    \item[] Question: Are the creators or original owners of assets (e.g., code, data, models), used in the paper, properly credited and are the license and terms of use explicitly mentioned and properly respected?
    \item[] Answer: \answerYes{}  
    \item[] Justification: All datasets and models used are cited in the paper.
    MNIST, CelebA, LFW, dSprites, and Colored MNIST are publicly available
    research datasets. The IGN~\cite{shocher2023idempotent}, Latent diffusion \cite{brooks2023instructpix2pix} and
    VQ-VAE-2~\cite{razavi2019generating} baselines are properly cited.
    
    \item[] Guidelines:
    \begin{itemize}
        \item The answer \answerNA{} means that the paper does not use existing assets.
        \item The authors should cite the original paper that produced the code package or dataset.
        \item The authors should state which version of the asset is used and, if possible, include a URL.
        \item The name of the license (e.g., CC-BY 4.0) should be included for each asset.
        \item For scraped data from a particular source (e.g., website), the copyright and terms of service of that source should be provided.
        \item If assets are released, the license, copyright information, and terms of use in the package should be provided. For popular datasets, \url{paperswithcode.com/datasets} has curated licenses for some datasets. Their licensing guide can help determine the license of a dataset.
        \item For existing datasets that are re-packaged, both the original license and the license of the derived asset (if it has changed) should be provided.
        \item If this information is not available online, the authors are encouraged to reach out to the asset's creators.
    \end{itemize}

\item {\bf New assets}
    \item[] Question: Are new assets introduced in the paper well documented and is the documentation provided alongside the assets?
    \item[] Answer: \answerNA{} 
    \item[] Justification: The paper does not introduce new datasets or model
    checkpoints.
    \item[] Guidelines:
    \begin{itemize}
        \item The answer \answerNA{} means that the paper does not release new assets.
        \item Researchers should communicate the details of the dataset\slash code\slash model as part of their submissions via structured templates. This includes details about training, license, limitations, etc. 
        \item The paper should discuss whether and how consent was obtained from people whose asset is used.
        \item At submission time, remember to anonymize your assets (if applicable). You can either create an anonymized URL or include an anonymized zip file.
    \end{itemize}

\item {\bf Crowdsourcing and research with human subjects}
    \item[] Question: For crowdsourcing experiments and research with human subjects, does the paper include the full text of instructions given to participants and screenshots, if applicable, as well as details about compensation (if any)? 
    \item[] Answer: \answerNA{}  
    \item[] Justification: The paper does not involve crowdsourcing or human subjects.
    \item[] Guidelines:
    \begin{itemize}
        \item The answer \answerNA{} means that the paper does not involve crowdsourcing nor research with human subjects.
        \item Including this information in the supplemental material is fine, but if the main contribution of the paper involves human subjects, then as much detail as possible should be included in the main paper. 
        \item According to the NeurIPS Code of Ethics, workers involved in data collection, curation, or other labor should be paid at least the minimum wage in the country of the data collector. 
    \end{itemize}

\item {\bf Institutional review board (IRB) approvals or equivalent for research with human subjects}
    \item[] Question: Does the paper describe potential risks incurred by study participants, whether such risks were disclosed to the subjects, and whether Institutional Review Board (IRB) approvals (or an equivalent approval/review based on the requirements of your country or institution) were obtained?
    \item[] Answer: \answerNA{} 
    \item[] Justification: No human subjects are involved in this research.
    \item[] Guidelines:
    \begin{itemize}
        \item The answer \answerNA{} means that the paper does not involve crowdsourcing nor research with human subjects.
        \item Depending on the country in which research is conducted, IRB approval (or equivalent) may be required for any human subjects research. If you obtained IRB approval, you should clearly state this in the paper. 
        \item We recognize that the procedures for this may vary significantly between institutions and locations, and we expect authors to adhere to the NeurIPS Code of Ethics and the guidelines for their institution. 
        \item For initial submissions, do not include any information that would break anonymity (if applicable), such as the institution conducting the review.
    \end{itemize}

\item {\bf Declaration of LLM usage}
    \item[] Question: Does the paper describe the usage of LLMs if it is an important, original, or non-standard component of the core methods in this research? Note that if the LLM is used only for writing, editing, or formatting purposes and does \emph{not} impact the core methodology, scientific rigor, or originality of the research, declaration is not required.
    \item[] Answer: \answerNA{} 
    \item[] Justification: LLMs are not used as part of the core methodology.
    LLMs were used only for writing assistance and do not affect the scientific
    contributions, methodology, or results of the paper.
    \item[] Guidelines:
    \begin{itemize}
        \item The answer \answerNA{} means that the core method development in this research does not involve LLMs as any important, original, or non-standard components.
        \item Please refer to our LLM policy in the NeurIPS handbook for what should or should not be described.
    \end{itemize}

\end{enumerate}

\end{document}